\newtheorem{theorem}{Theorem}[section]
\newtheorem{definition}{Definition}[section]
\newtheorem{lemma}{Lemma}[section]
\newtheorem{corollary}{Corollary}[section]
\newtheorem{proposition}{Proposition}[section]
\newtheorem{example}{Example}[section]
\newtheorem{remark}{Remark}[section]
\journal{Fuzzy Sets and Systems}
\newcommand{\Ione}{{\bf(I1)}\xspace}
\newcommand{\Itwo}{{\bf(I2)}\xspace}
\newcommand{\Ithree}{{\bf(I3)}\xspace}
\newcommand{\NP}{{\bf(NP)}\xspace}
\newcommand{\EP}{{\bf(EP)}\xspace}
\newcommand{\IP}{{\bf(IP)}\xspace}
\newcommand{\OP}{{\bf(OP)}\xspace}
\newcommand{\CPN}{{\bf(CP(N))}\xspace}
\newcommand{\LCPN}{{\bf(L$\mhyphen$CP(N))}\xspace}
\newcommand{\RCPN}{{\bf(R$\mhyphen$CP(N))}\xspace}
\newcolumntype{M}[1]{>{\centering\arraybackslash}m{#1}}
\def\mystrut(#1,#2){\vrule height #1 depth #2 width 0pt}
\newcolumntype{C}[1]{%
	>{\mystrut(3ex,2ex)\centering}%
	p{#1}%
	<{}}
\mathchardef\mhyphen="2D
\renewcommand*\l@figure{\@dottedtocline{1}{1em}{3em}}
\renewcommand*\l@table{\@dottedtocline{1}{1em}{3em}}
\begin{document}

\begin{frontmatter}
\title{On the Non-Uniqueness of Representation of $(U,N)$-Implications}

\author[sas]{Raquel Fernandez-Peralta\corref{cor1}}
\ead{raquel.fernandez@mat.savba.sk}
\author[sas,ce]{Andrea Mesiarov\' a-Zem\' ankov\' a}
\ead{zemankova@mat.savba.sk}
\address[sas]{Mathematical Institute, Slovak Academy of Sciences, Štefánikova 49, 814 73 Bratislava,
Slovakia}
\address[ce]{CE IT4Innovations - Institute for Research and Applications of Fuzzy Modeling,  University of
Ostrava, 30. dubna 22, 701 03 Ostrava, Czech Republic}
\cortext[cor1]{Corresponding author.}

\date{\today}

\begin{abstract}
    Fuzzy implication functions constitute fundamental operators in fuzzy logic systems, extending classical conditionals to manage uncertainty in logical inference. Among the extensive families of these operators, generalizations of the classical material implication have received considerable theoretical attention, particularly $(S,N)$-implications constructed from t-conorms and fuzzy negations, and their further generalizations to $(U,N)$-implications using disjunctive uninorms. Prior work has established characterization theorems for these families under the assumption that the fuzzy negation $N$ is continuous, ensuring uniqueness of representation. In this paper, we disprove  this last fact for $(U,N)$-implications and we show that they do not necessarily possess a unique representation, even if the fuzzy negation is continuous. Further, we provide a comprehensive study of uniqueness conditions for both uninorms with continuous and non-continuous underlying functions. Our results offer important theoretical insights into the structural properties of these operators.
\end{abstract}

\begin{keyword}
Fuzzy implication function \sep aggregation function \sep uninorm \sep $(U,N)$-implication \sep characterization
\end{keyword}

\end{frontmatter}

\section{Introduction}\label{section:introduction}

Fuzzy implication functions serve as fundamental operators in fuzzy logic, extending classical logical conditionals to accommodate the continuous nature of approximate reasoning. These functions have become pivotal in computational intelligence due to their ability to model uncertainty-driven inference \cite{Baczynski2008}, finding applications across diverse domains including image processing \cite{DeBaets1998}, knowledge discovery \cite{Fernandez-Peralta2025,Fernandez-Peralta2023}, and intelligent control systems \cite{Mendel2023}. Beyond their practical utility, the theoretical study of these operators is an important research area encompassing the development of new families of fuzzy implication functions, the analysis of their additional properties, and the pursuit of structural representations and characterizations. Indeed, numerous families of these operators have been introduced in the literature under different motivations, along with various additional properties \cite{Fernandez-Peralta2025B}.  This proliferation of operators has underscored the critical importance of studying their characterization, examining their intersections, and revisiting long-standing open problems in the field \cite{Massanet2024}.

Among the vast array of fuzzy implications functions, generalizations of the classical material implication $p \to q \equiv \neg q \vee p$ represent one of the most fundamental and thoroughly investigated classes. Historically, these generalizations first emerged in the form of $S$-implications, constructed using t-conorms and strong negations \cite{Trillas1985}. Subsequent developments expanded this framework to $(S,N)$-implications by incorporating arbitrary fuzzy negations \cite{Klir1995}, and further generalized to $(A,N)$-implications where aggregation functions replace t-conorms \cite{Ouyang2012}. A particularly notable subclass arises when disjunctive uninorms are used, leading to $(U,N)$-implications \cite{DeBaets1999}, which is a family that is suitable to define a fuzzy morphology based on uninorms \cite{Gonzalez-Hidalgo2015}.

Throughout the years, there have been several advances in the characterization of these families of fuzzy implication functions: In \cite{Baczynski2007}, the authors characterized $(S,N)$-implications with continuous negations, while \cite{Backzynski2009,Libo2017,Zhou2020} extended these results to uninorms, 2-uninorms and grouping functions under the same continuity condition. Later, \cite{Fernandez-Peralta2022} provided a representation theorem of $(S,N)$-implications for non-continuous negations, linking the problem to the completion of t-conorms in undefined regions \cite{Fernandez-Peralta2023B,Fernandez-Peralta2023C}. A key step in these results is the reconstruction of the aggregation function from the given negation and implication. Depending on the conditions, this reconstruction may or may not be unique—a direct consequence of whether the fuzzy implication's representation in terms of the aggregation function is unique. For example, in the case of $(S,N)$-implications, uniqueness holds when the negation $N$ is continuous but fails otherwise. Similarly, \cite{Backzynski2009} claimed that $(U,N)$-implications have a unique representation when the negation $N$ is continuous. However, in this work we prove that $(U,N)$-implications do not generally possess a unique representation, even when $N$ is continuous.

This finding raises new questions regarding the conditions that guarantee uniqueness of representation, as well as the characterization of all uninorms generating the same fuzzy implication function. In this paper, we provide a comprehensive investigation of both problems, considering uninorms with both continuous and non-continuous underlying functions \cite{Mesiarova2017OS,Mesiarova2018}.

The structure of the paper is as follows. First, in Section \ref{section:preliminaries} we include the necessary background. In Section \ref{section:corrigendum} we prove that the representation of $(U,N)$-implications is not unique. In Section \ref{section:uniqueness} we present the results regarding the uniqueness of representation. The paper ends in Section \ref{section:conclusions} with some conclusions and future work.

\section{Preliminaries}\label{section:preliminaries}


In this section, we provide the results and definitions necessary for making this paper self-contained. For more information regarding fuzzy implications and aggregation functions we refer the reader to the books \cite{Baczynski2008,Beliakov2010,Calvo2002,Grabisch2009,Klement2000}.

A fuzzy negation $N$ is a decreasing unary function which is the generalization of the classical negation.
\begin{definition}[\bf{\cite{Fodor1994}}]\label{def:fuzzy_negation}
	A unary function $N:[0,1] \to [0,1]$ is said to be a \emph{fuzzy negation} if it satisfies:
	\begin{description}
		\item[(N1)] $N(0)=1$ and $N(1)=0$. \hfill (Boundary Condition)
		\item[(N2)] $N$ is decreasing. \hfill (Monotonicity)
	\end{description}
A fuzzy negation $N$ is called
\begin{enumerate}[label=(\roman*)]
\item \emph{strict}, if it is strictly decreasing and continuous.
\item \emph{strong}, if it is an involution, i.e., $N(N(x))=x$ for all $x \in [0,1]$.
\end{enumerate}
\end{definition}
It is straightforward to prove that any strong fuzzy negation is strict.

For some results, it is useful to consider the pseudo-inverse of fuzzy negations which is defined as follows.
\begin{definition}[\bf{\cite{Baczynski2007}}]\label{def:pseudo-negation}
	Let $N$ be a fuzzy negation, the \emph{modified pseudo-inverse} of $N$ is denoted by $\mathfrak{R}_N$ and it is defined as follows
	\begin{equation}\label{eq:modified_pseudoinverse_negation}
		\mathfrak{R}_N(x)
		=
		\left\{ \begin{array}{ll}
			1 &   \text{if }   x=0, \\
			N^{(-1)}(x) & \text{if } x \in ]0,1], \\
		\end{array} \right.
	\end{equation}
	where $N^{(-1)}$  is the pseudo-inverse of $N$ given by
	$$N^{(-1)}(x)=\sup \{y \in [0,1] \, | \, N(y)>x\}, \quad \text{for all } x \in [0,1].$$
\end{definition}
The pseudo-inverse of a continuous negation is a strictly decreasing fuzzy negation that satisfies the inverse function equalities, but one of them only when restricted to its range.

\begin{proposition}[\bf{\cite[Proposition 3.13]{Baczynski2007}}]\label{prop:RN} 
If $N$ is a continuous fuzzy negation, then $\mathfrak{R}_N$ is a strictly decreasing fuzzy negation. Moreover, the following statements hold:
\begin{enumerate}[label=(\roman*)]
\item $\mathfrak{R}_N^{(-1)} = N$.
\item $N \circ \mathfrak{R}_N = \text{id}_{[0,1]}$.
\item $\mathfrak{R}_N \circ N \mid_{\text{Ran }\mathfrak{R}_N} = \text{id}_{\text{Ran }\mathfrak{R}_N}$.
\end{enumerate}
\end{proposition}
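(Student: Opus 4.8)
The plan is to derive everything from the elementary order- and continuity-theoretic properties of the pseudo-inverse, treating the boundary points $0$ and $1$ separately from the interior $]0,1[$. Since $N$ is continuous with $N(0)=1$, $N(1)=0$, its range is all of $[0,1]$ by the intermediate value theorem, and for each $x\in\,]0,1[$ the set $A_x:=\{y\in[0,1]\mid N(y)>x\}$ is, by monotonicity of $N$, a sub-interval of the form $[0,c)$ or $[0,c]$; it is nonempty because $N(0)=1>x$, and it is bounded away from $1$ because continuity forces $N<x$ on a left neighbourhood of $1$ (as $N(1)=0<x$). Setting $c:=N^{(-1)}(x)=\sup A_x$, I first show $c\in\,]0,1[$ and, the key point, $N(c)=x$: approaching $c$ from below along points of $A_x$ gives $N(c)\ge x$ by continuity, while approaching $c$ from above (possible since $c<1$) along points outside $A_x$, where $N\le x$, gives $N(c)\le x$. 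The endpoints are immediate from the definition: $\mathfrak{R}_N(0)=1$ and $\mathfrak{R}_N(1)=N^{(-1)}(1)=\sup\emptyset=0$.

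Statement (ii), $N\circ\mathfrak{R}_N=\text{id}_{[0,1]}$, is then just the identity $N(c)=x$ on $]0,1[$ together with the two boundary evaluations $N(\mathfrak{R}_N(0))=N(1)=0$ and $N(\mathfrak{R}_N(1))=N(0)=1$. From this I deduce that $\mathfrak{R}_N$ is a strictly decreasing fuzzy negation: it is decreasing because $A_x$ shrinks as $x$ increases (so $N^{(-1)}$ is decreasing on $]0,1]$) and $\mathfrak{R}_N(0)=1$ dominates every other value, while (ii) makes $\mathfrak{R}_N$ injective, and a decreasing injective self-map of $[0,1]$ is strictly decreasing. Statement (iii) is then essentially free: if $y\in\text{Ran}\,\mathfrak{R}_N$, write $y=\mathfrak{R}_N(x)$; then $N(y)=x$ by (ii), hence $\mathfrak{R}_N(N(y))=\mathfrak{R}_N(x)=y$.

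For (i), $\mathfrak{R}_N^{(-1)}=N$, I would establish, for $x\in\,]0,1[$, the two inequalities $\mathfrak{R}_N^{(-1)}(x)\ge N(x)$ and $\mathfrak{R}_N^{(-1)}(x)\le N(x)$, the endpoints again being a direct computation. For the lower bound, given $y\in[0,N(x)[$, continuity of $N$ at $x$ (together with $x<1$) yields some $x'\in\,]x,1[$ with $N(x')>y$, so $\mathfrak{R}_N(y)=N^{(-1)}(y)\ge x'>x$; hence $[0,N(x)[\subseteq\{y\mid\mathfrak{R}_N(y)>x\}$ and the supremum is at least $N(x)$. For the upper bound, if $y>N(x)$ then every $z$ with $N(z)>y>N(x)$ satisfies $z<x$ by monotonicity of $N$, so $\mathfrak{R}_N(y)=N^{(-1)}(y)\le x$; therefore $\{y\mid\mathfrak{R}_N(y)>x\}\subseteq[0,N(x)]$ and the supremum is at most $N(x)$.

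The main obstacle is not a single hard step but the careful bookkeeping around the pseudo-inverse: keeping track of the $\sup\emptyset=0$ convention, remembering that $\mathfrak{R}_N$ itself need \emph{not} be continuous (so one cannot invoke a symmetric duality such as $(\mathfrak{R}_N^{(-1)})^{(-1)}=\mathfrak{R}_N$ and must argue (i) by hand), and pinpointing exactly where continuity of $N$ — rather than mere monotonicity — is used, namely to upgrade the weak inequalities to the strict ones that give $N(c)=x$ and the lower bound in (i). Everything else reduces to routine manipulation of suprema and of downward-closed sets.
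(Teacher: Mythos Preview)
Your proposal is correct. Note, however, that the paper does not supply its own proof of this proposition: it is quoted verbatim as a preliminary result from \cite{Baczynski2007} (Proposition~3.13 there) and is used as a black box in the proof of Proposition~\ref{proposition:2}. So there is no ``paper's proof'' to compare against.

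That said, your argument is the standard one and is sound. The key technical point --- that $N(N^{(-1)}(x))=x$ for $x\in\,]0,1[$ when $N$ is continuous --- you handle cleanly via the two one-sided limits at $c=\sup A_x$, and you correctly identify that continuity (not mere monotonicity) is what upgrades the inequalities to an equality. Deducing strict monotonicity of $\mathfrak{R}_N$ from injectivity (via (ii)) plus weak monotonicity is the right move, and your derivation of (iii) from (ii) is immediate. Your treatment of (i) by sandwiching $\mathfrak{R}_N^{(-1)}(x)$ between $N(x)$ and $N(x)$ is also correct; the only place to be slightly careful is when $N(x)=0$ for some $x\in\,]0,1[$ (which can happen, since $N$ need not be strictly decreasing), but then the lower-bound inclusion $[0,N(x)[\subseteq\{y\mid\mathfrak{R}_N(y)>x\}$ becomes vacuous and the upper bound alone gives the conclusion, so no harm is done.
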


Bivariate aggregation functions were originally developed as generalizations of classical conjunctions and disjunctions.
\begin{definition}[\bf{\cite[Definition 3]{Pradera2016}}] 
A \emph{bivariate aggregation function} is an increasing function $A:[0,1]^2 \to [0,1]$ verifying the boundary conditions $A(0,0)=0$ and $A(1,1)=1$.
\end{definition}
Since in this work we only deal with two variable aggregation functions, we omit the term bivariate for simplification. An aggregation function is called \emph{conjuctive} if $A \leq \min$ and \emph{disjunctive} if $A \geq \max$.

Among the most important aggregation functions are uninorms, characterized by associativity and the existence of a neutral element $e \in [0,1]$.
\begin{definition}[\bf{\cite{Klement2000,Yager1996}}]\label{def:uninorm}
	A binary operator $U:[0,1]^2 \to [0,1]$ is said to be a \emph{uninorm} if there exists $e \in [0,1]$, called \emph{neutral element}, such that $U$ satisfies:
	\begin{description}
		\item[(U1)] $U(x,y)=U(y,x)$ for all $x,y \in [0,1]$. \hfill (Commutativity)
		\item[(U2)] $U(x,U(y,z))=U(U(x,y),z)$ for all $x,y,z \in [0,1]$. \hfill (Associativity)
		\item[(U3)] $U(x,y) \leq U(x,z)$ when $y \leq z$, for all $x,y,z \in [0,1]$. \hfill (Monotonicity)
		\item[(U4)] $U(x,e)=x$ for all $x \in [0,1]$. \hfill (Neutral element)
	\end{description}
\end{definition}
For all uninorms, it holds that $U(1,0) \in \{0,1\}$ and we say that $U$ is \emph{conjunctive} if $U(1,0)=0$ and \emph{disjunctive} if $U(1,0)=1$. Besides, if in Definition \ref{def:uninorm} the neutral element is $e=1$, we recover the most common conjunctive aggregation functions, namely \textit{t-norms}, whereas setting $e=0$ yields the best-known disjunctive operations, which are \textit{t-conorms}. In order to distinguish from these particular cases, we say that a uninorm is \emph{proper} if $e \in ]0,1[$.
\begin{proposition}[\bf{\cite[Proposition 1]{Mas2015}}] Let $U$ be a uninorm with neutral element $e \in ]0,1[$. Then $U$ has the following representation
\begin{equation}\label{eq:uninorm}
    U(x,y)
    =
    \left\{\begin{array}{ll}
        e \cdot T_U\left(\frac{x}{e},\frac{y}{e}\right) & \text{if } x,y \in [0,e], \\
        e + (1-e) \cdot S_U\left(\frac{x-e}{1-e},\frac{y-e}{1-e}\right) & \text{if } x,y \in [e,1], \\
        C(x,y) & \text{otherwise,}
    \end{array}
    \right.
\end{equation}
where $T_U$ is a t-norm, $S_U$ is a t-conorm and $C:[0,e] \times [e,1] \cup [e,1] \times [0,e] \to [0,1]$ fulfills $\min \{x,y\} \leq C(x,y) \leq \max\{x,y\}$ for all $(x,y) \in [0,e] \times [e,1] \cup [e,1] \times [0,e]$. In this case, $T_U$ and $S_U$ are called the underlying t-norm and t-conorm, respectively.
\end{proposition}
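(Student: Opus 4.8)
The plan is to exhibit $T_U$, $S_U$ and $C$ explicitly as affine rescalings (respectively a restriction) of $U$, and to verify the claimed properties using only the uninorm axioms (U1)--(U4). The first, and load-bearing, step is to record two closure facts: for $x,y\in[0,e]$ one has $0=U(0,0)\le U(x,y)\le U(e,e)=e$, so $U$ maps $[0,e]^2$ into $[0,e]$; and symmetrically, for $x,y\in[e,1]$ one has $e=U(e,e)\le U(x,y)\le 1$, so $U$ maps $[e,1]^2$ into $[e,1]$. (Here $U(e,e)=e$ is just (U4).) These two facts are what make the definitions $T_U(a,b):=\tfrac1e\,U(ea,eb)$ and $S_U(a,b):=\tfrac1{1-e}\bigl(U(e+(1-e)a,\,e+(1-e)b)-e\bigr)$, for $a,b\in[0,1]$, legitimate as maps $[0,1]^2\to[0,1]$.

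Next I would check that $T_U$ is a t-norm and $S_U$ a t-conorm. Commutativity and monotonicity are inherited at once from (U1) and (U3) through the affine substitutions; associativity is inherited from (U2), and this is exactly the point where the closure facts above are needed, since one must know that the inner expressions remain in $[0,e]$ (resp. $[e,1]$) before associativity of $U$ may be applied there. The neutral elements come from (U4): $T_U(a,1)=\tfrac1e U(ea,e)=a$ and $S_U(a,0)=\tfrac1{1-e}(U(e+(1-e)a,e)-e)=a$; the boundary values $T_U(0,0)=0$, $T_U(1,1)=1$, $S_U(0,0)=0$, $S_U(1,1)=1$ then follow as well.

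For the remaining region I would simply set $C$ to be the restriction of $U$ to $[0,e]\times[e,1]\cup[e,1]\times[0,e]$. To obtain the sandwich bound it suffices, by (U1), to treat $x\le e\le y$; then (U3) gives $x=U(x,e)\le U(x,y)\le U(e,y)=y$, i.e.\ $\min\{x,y\}\le C(x,y)\le\max\{x,y\}$. Finally, unravelling the definitions shows $U(x,y)=e\,T_U(x/e,y/e)$ on $[0,e]^2$, $U(x,y)=e+(1-e)S_U\bigl(\tfrac{x-e}{1-e},\tfrac{y-e}{1-e}\bigr)$ on $[e,1]^2$, and $U(x,y)=C(x,y)$ otherwise, which is \eqref{eq:uninorm}; on the overlaps $x=e$ or $y=e$ all three formulas collapse to the identity $U(e,t)=t$, so the piecewise description is consistent. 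I expect no genuine obstacle beyond this bookkeeping: the only subtlety is ordering the argument so that the closure facts precede the transfer of associativity to $T_U$ and $S_U$.
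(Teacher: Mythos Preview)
Your argument is correct and is essentially the standard one: rescale $U$ on $[0,e]^2$ and on $[e,1]^2$ to obtain $T_U$ and $S_U$, transfer (U1)--(U4) through the affine bijections, and use monotonicity together with (U4) to get the $\min$/$\max$ bounds on the off-diagonal blocks. The only point worth polishing is that the boundary values $U(0,0)=0$ and $U(1,1)=1$ are not assumed outright in Definition~\ref{def:uninorm}; you implicitly derive them from (U3) and (U4) via $U(0,0)\le U(0,e)=0$ and $U(1,1)\ge U(1,e)=1$, which is fine but might be said explicitly.

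There is nothing to compare on the paper's side: this proposition is quoted from \cite{Mas2015} as a preliminary fact and the present paper does not supply a proof of its own. Your write-up matches the classical argument one finds in the uninorm literature (e.g.\ Fodor--Yager--Rybalov and the references behind \cite{Mas2015}).
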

A uninorm has continuous underlying functions if its underlying t-norm and underlying t-conorm are both continuous. The class of uninorms with continuous underlying functions is denoted as \textbf{COU}.

The structure of uninorms is more complex than t-norms and t-conorms, so usually these operators are classified in many different families \cite{Mas2015}. Some well-known characterized families of uninorms can be found hereunder.
\begin{theorem}[\textbf{\cite[Theorem 1]{Mas2015}}] Let $U$ be a uninorm with neutral element $e \in ]0,1[$.
    \begin{enumerate}[label=(\roman*)]
    \item If $U(0,1)=0$ and the function $U(\cdot,1)$ is continuous except in $x=e$, then $U$ is given by Eq. (\ref{eq:uninorm}) with $C=\min$. This class of uninorms is denoted by $\mathcal{U}_{\min}$.
    \item If $U(0,1)=1$ and the function $U(\cdot,0)$ is continuous except in $x=e$, then $U$ is given by Eq. (\ref{eq:uninorm}) with $C=\max$. This class of uninorms is denoted by $\mathcal{U}_{\max}$.
    \end{enumerate}
\end{theorem}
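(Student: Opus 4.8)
The plan is to establish part~(i) directly and then deduce part~(ii) by the usual $N$-duality. Fix a uninorm $U$ with neutral element $e \in ]0,1[$ such that $U(0,1)=0$ and such that the section $g\colon[0,1]\to[0,1]$, $g(x):=U(x,1)$, is continuous at every point of $[0,1]\setminus\{e\}$. By Eq.~(\ref{eq:uninorm}) the underlying t-norm and t-conorm pieces are already determined, so the only thing to prove is that the component $C$ equals $\min$; by commutativity it is enough to show $U(x,y)=x$ whenever $x\in[0,e[$ and $y\in[e,1]$. I would first collect elementary facts about $g$: it is increasing by (U3) and satisfies $g(x)\ge U(x,e)=x$ for all $x$ by (U4), with $g(0)=U(0,1)=0$ and $g(e)=U(1,e)=1$. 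Moreover $U(1,1)\ge U(1,e)=1$, so $1$ is idempotent for $U$, and then (U2) gives $g(g(x))=U(U(x,1),1)=U(x,U(1,1))=U(x,1)=g(x)$; hence $g$ is an idempotent self-map of $[0,1]$, its image coinciding with its set of fixed points.

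The heart of the proof is to show that $g(x)=x$ for every $x\in[0,e[$. First, $g$ maps $[0,e[$ into itself: if $g(x_0)\ge e$ for some $x_0<e$, then, since $g$ is continuous on $[0,x_0]\subseteq[0,1]\setminus\{e\}$ and $g(0)=0<e$, the intermediate value theorem furnishes $x_1\in[0,x_0]$ with $g(x_1)=e$, whence the contradiction $e=g(x_1)=g(g(x_1))=g(e)=1$. Consequently $g$ restricts to a continuous, increasing, idempotent self-map of $[0,e[$ fixing $0$ with $g\ge\operatorname{id}$. For such a map, suppose $g(c)>c$ for some $c\in[0,e[$ and set $a=\sup\{z\in[0,c]:g(z)=z\}$; continuity makes the fixed-point set in $[0,c]$ closed, so $g(a)=a$, and $a<c$ since $c$ is not fixed, while $g(z)>z$ strictly for $z\in\,]a,c]$. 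But $g(z)\to g(a)=a$ as $z\downarrow a$, so for $z\in\,]a,c[$ near $a$ the point $w:=g(z)$ satisfies $a<z<w<c$ and $g(w)=g(g(z))=g(z)=w$, contradicting the maximality of $a$. Therefore $g=\operatorname{id}$ on $[0,e[$.

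Finishing is then immediate: for $x\in[0,e[$ and $y\in[e,1]$, monotonicity yields $x=U(x,e)\le U(x,y)\le U(x,1)=g(x)=x$, so $U(x,y)=x=\min\{x,y\}$; commutativity handles the symmetric region $[e,1]\times[0,e[$, and the values on the lines $x=e$ or $y=e$ are already prescribed by (U4) consistently with the t-conorm branch of Eq.~(\ref{eq:uninorm}). Hence $C=\min$ and $U\in\mathcal{U}_{\min}$. For part~(ii), I would apply part~(i) to the dual uninorm $U^{d}(x,y)=1-U(1-x,1-y)$, which has neutral element $1-e$, satisfies $U^{d}(0,1)=1-U(1,0)=1-U(0,1)=0$, and whose section $x\mapsto U^{d}(x,1)=1-U(1-x,0)$ is continuous except at $x=1-e$; thus $U^{d}\in\mathcal{U}_{\min}$, and translating back gives $C(x,y)=1-\min\{1-x,1-y\}=\max\{x,y\}$, i.e.\ $U\in\mathcal{U}_{\max}$.

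The step I expect to be the main obstacle is not a single computation but getting the interplay of the hypotheses right: the whole argument hinges on the idempotency identity $g\circ g=g$, and once it is available the rigidity of continuous increasing idempotent maps does the rest, but one must exploit precisely that the only discontinuity of $g$ sits at the neutral element $e$ and that $g(e)=1$ — this is exactly what blocks $g$ from escaping $[0,e[$ across the jump. Handling that jump globally, rather than reasoning separately on the two intervals where $g$ is continuous, is the delicate point.
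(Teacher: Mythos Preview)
The paper does not prove this theorem: it is a preliminary result quoted from \cite{Mas2015} with no proof supplied here, so there is nothing in the paper to compare your argument against.

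That said, your proof is sound. The key observations --- that $g(x)=U(x,1)$ is an increasing idempotent self-map with $g(0)=0$ and $g(e)=1$, that continuity on $[0,e[$ forces $g$ to stay below $e$ there (otherwise $g$ would hit $e$ and then $e=g(e)=1$), and that a continuous increasing idempotent map dominating the identity must equal the identity --- are all correct and cleanly assembled. The fixed-point argument in Step~3 is the only place requiring care, and you handle it properly: the closedness of the fixed-point set gives $g(a)=a$, and continuity at $a$ produces a new fixed point strictly between $a$ and $c$, contradicting maximality. The final squeeze $x=U(x,e)\le U(x,y)\le U(x,1)=x$ is exactly the right way to finish, and the duality reduction for part~(ii) is standard and correct.
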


\begin{definition}[\textbf{\cite[Definition 5.1.5]{Baczynski2008}}]
	A uninorm $U$ such that $U(x,x)=x$ for all $x \in [0,1]$ is said to be an \emph{idempotent uninorm}. The class of all idempotent uninorms is denoted by $\mathcal{U}_{Idem}$.
\end{definition}
The characterization of idempotent uninorms can be found in \cite{Mas2015,Ruiz-Aguilera2010} and it is based on a decreasing function $g : [0,1] \to [0,1]$ which fulfills specific conditions.


\begin{definition}[\textbf{\cite[Theorem 5.1.12]{Baczynski2008}}]
	A uninorm $U$ is called \emph{representable} if it has a continuous additive generator, i.e., there exists a continuous and strictly increasing function $h:[0,1] \to [-\infty,+\infty]$ such that $h(0)=-\infty$, $h(e)=0$ for an $e \in ]0,1[$ and $h(1)=+\infty$, which is uniquely determined up to a positive multiplicative constant, such that
	$$U(x,y)
	=
	\left\{\begin{array}{ll}
		0 & \text{if } (x,y) \in \{(0,1),(1,0)\}, \\
		h^{-1}(h(x)+h(y)) & \text{otherwise},
	\end{array}
	\right.
	$$
	or 
	$$U(x,y)
	=
	\left\{\begin{array}{ll}
		1 & \text{if } (x,y) \in \{(0,1),(1,0)\}, \\
		h^{-1}(h(x)+h(y)) & \text{otherwise}.
	\end{array}
	\right.
	$$
	The class of all representable uninorms is denoted by  $\mathcal{U}_{Rep}$.
\end{definition}

Now, regarding the classical conditional, the generalization to fuzzy logic is given by fuzzy implication functions which are defined as follows.

\begin{definition}[\bf{\cite{Baczynski2008,Fodor1994}}]\label{defimp}
	A binary operator $I:[0,1]^2 \to [0,1]$ is said to be a \emph{fuzzy implication function} if it satisfies:
	\begin{description}
		\item[\Ione]  $I(x,z)\geq I(y,z)\ $  when  $\ x\leq y$, for all $x,y,z\in[0,1]$. \hfill (Left Antitonicity)
		\item[\Itwo]  $I(x,y)\leq I(x,z)\ $  when  $\ y\leq z$, for all $x,y,z\in[0,1]$. \hfill (Right Isotonicity)
		\item[\Ithree]  $I(0,0)=I(1,1)=1$ and $I(1,0)=0$. \hfill (Boundary Conditions)
	\end{description}
\end{definition}
From Definition \ref{defimp} it can be easily derived that $I(0,x) = I(x,1) = 1$ for all $x \in [0, 1]$. However, the values $I(x,0)$ and $I(1,x)$ are not predetermined by the definition. In fact, the values $I(x,0)$ define what is called the natural negation of a fuzzy implication function.
\begin{definition}[\textbf{\cite[Definition 1.4.14]{Baczynski2008}}]\label{def:naturalnegationI}
	Let $I$ be a fuzzy implication function. The function $N_I:[0,1] \to [0,1]$ defined by
	$$N_I(x)=I(x,0), \quad \text{for all } x \in [0,1],$$
	is a fuzzy negation called the \emph{natural negation} of $I$.
\end{definition}
Apart from the natural negation which considers the horizontal section of the fuzzy implication function at $y=0$, for those operators related to uninorms is of use to consider other horizontal cuts that may be fuzzy negations.

\begin{definition}[\textbf{\cite[Definition 3.4]{Backzynski2009}}] Let $I:[0,1]^2 \to [0,1]$ be any function and $\alpha \in [0,1[$. If the function $N_I^{\alpha}:[0,1] \to [0,1]$ given by
$$N_I^{\alpha} = I(x,\alpha), \quad \text{for all } x \in [0,1],$$
is a fuzzy negation, then it is called the \emph{natural negation of $I$ with respect to $\alpha$}.
\end{definition}

Beyond the basic definition, additional properties are often studied and imposed depending on the context. Among the numerous possible properties (see \cite[Table 1]{Fernandez-Peralta2025} for a comprehensive list), we focus here on those most relevant to our work: 

\begin{itemize}[noitemsep,topsep=0pt,leftmargin=*,labelwidth=4.5em,align=left]
    \item[\text{\NP}] \emph{Left neutrality}: \(I(1,y) = y\), \(y \in [0,1]\).
    \item[\text{\EP}] \emph{Exchange principle}: \(I(x,I(y,z)) = I(y,I(x,z))\), \(x,y,z \in [0,1]\).
    \item[\text{\IP}] \emph{Identity principle}: \(I(x,x) = 1\), \(x \in [0,1]\).
    \item[\text{\OP}] \emph{Ordering property}: \(I(x,y)=1 \Leftrightarrow x \leq y\), \(x,y \in [0,1]\).
    \item[\text{\CPN}] \emph{Contrapositive symmetry} (w.r.t. \(N\)): \(I(x,y) = I(N(y),N(x))\), \(x,y \in [0,1]\).
    \item[\text{\LCPN}] \emph{Left contraposition} (w.r.t. \(N\)): \(I(N(x),y) = I(N(y),x)\), \(x,y \in [0,1]\).
    \item[\text{\RCPN}] \emph{Right contraposition} (w.r.t. \(N\)): \(I(x,N(y)) = I(y,N(x))\), \(x,y \in [0,1]\).
\end{itemize}

Furthermore, numerous families of fuzzy implication functions have been studied in the literature (see \cite{Fernandez-Peralta2025}). In this work, we focus specifically on generalizations of the material implication known as $(A,N)$-implications, where $A$ represents a given aggregation function.

\begin{definition}\label{def:(A,N)implications} 
A function $I:[0,1]^2 \to [0,1]$ is called an \emph{$(A,N)$-implication} if there exist an aggregation function $A$ and a fuzzy negation $N$ such that
\begin{equation}\label{eq:(A,N)-implication}
I(x,y)=A(N(x),y), \quad x,y \in [0,1].
\end{equation}
If $I$ is an $(A,N)$-implication generated from $A$ and $N$, then it is denoted by $I_{A,N}$.
\end{definition}
The function defined in Eq. (\ref{eq:(A,N)-implication}) satisfies the conditions for being a fuzzy implication function if and only if $A$ is a disjunctor (see \cite[Thorem 33]{Pradera2016}). However, since the structure of this family highly depends on the underlying aggregation function, it is typically divided into distinct subfamilies. Below we highlight the three most relevant cases, for a more complete list we refer the reader to \cite[Table 4]{Fernandez-Peralta2025}.
\begin{itemize}
\item If $A=S$ is a t-conorm and $N$ is a strong negation, then $I_{A,N}$ belongs to the family of $S$-implications \cite{Trillas1985}.
\item If $A=S$ is a t-conorm, then $I_{A,N}$ belongs to the family of $(S,N)$-implications \cite{Baczynski2008}.
\item If $A=U$ is a uninorm then $I_{A,N}$ belongs to the family of $(U,N)$-implications \cite{DeBaets1999}.
\end{itemize}
Since this family of fuzzy implication functions is one of the most important ones, a problem of interest throughout the last decades has been to provide a characterization in terms of their properties. Indeed, in \cite[Theorem 3.2]{Trillas1985} the authors already provided a characterization for $S$-implications based on the properties \NP, \EP and \CPN. Later, \cite[Theorem 5.2]{Baczynski2007} extended this result to continuous negations, showing that only \Ione and \EP are required in this case. For the non-continuous case, it was proved in \cite[Theorem 28]{Fernandez-Peralta2022C} that the characterization problem is related to the completion of the corresponding t-conorm unknown in certain regions that depend on the discontinuity points of the fuzzy negation. This problem has already been solved in the case when $N$ has one point of discontinuity \cite{Fernandez-Peralta2023B,Fernandez-Peralta2023C}. On the other hand, for continuous negations and uninorms the following characterization was obtained in \cite{Backzynski2009}.

\begin{theorem}[\textbf{\cite[Theorem 6.4]{Backzynski2009}}]\label{th:charac_un} 
For a function $I:[0,1]^2 \to [0,1]$ the following statements are equivalent:
\begin{enumerate}[label=(\roman*)]
\item $I$ is an $(U,N)$-implication generated from some uninorm $U$ with the neutral element $e \in ]0,1[$ and some continuous fuzzy negation $N$.
\item $I$ satisfies \Ione, \Ithree, \EP and $N_I^e$ is a continuous negation for some $e \in ]0,1[$.
\end{enumerate}
Moreover, the representation of the $(U,N)$-implication is unique with
$$N(x) = N_I^e(x), \quad x \in [0,1], \quad U(x,y) = I(\mathfrak{R}_N(x),y), \quad x,y \in [0,1].$$
\end{theorem}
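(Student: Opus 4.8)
The plan is to prove the two implications separately --- with the direction (ii)$\Rightarrow$(i) carrying essentially all of the work --- and then to extract the uniqueness clause from the computations used for (i)$\Rightarrow$(ii). For \emph{(i)$\Rightarrow$(ii)}, assume $I(x,y)=U(N(x),y)$ for a disjunctive uninorm $U$ with neutral element $e\in\,]0,1[\,$ and a continuous negation $N$. Then \Ione, \Itwo and \Ithree follow from the monotonicity of $U$ and $N$ together with the boundary values $U(0,0)=0$, $U(1,1)=1$ and $U(1,0)=U(0,1)=1$ (the last two by disjunctivity). Property \EP is precisely associativity and commutativity of $U$: $I(x,I(y,z))=U(N(x),U(N(y),z))=U(U(N(x),N(y)),z)=U(N(y),U(N(x),z))=I(y,I(x,z))$. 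Finally, since $e$ is the neutral element, $N_I^{e}(x)=I(x,e)=U(N(x),e)=N(x)$, so $N_I^{e}=N$ is a continuous negation; observe that this computation already yields the identity $N=N_I^{e}$ of the ``moreover'' part.

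For \emph{(ii)$\Rightarrow$(i)}, set $N:=N_I^{e}$, which is a continuous negation by hypothesis, so Proposition~\ref{prop:RN} applies: $\mathfrak{R}_N$ is a strictly decreasing negation, $N\circ\mathfrak{R}_N=\text{id}_{[0,1]}$, and $N$ is surjective, i.e.\ $\Ran N=[0,1]$. Define $U(x,y):=I(\mathfrak{R}_N(x),y)$. The decisive auxiliary step is that instantiating \EP at $z=e$ yields \RCPN with respect to $N$, namely $I(x,N(y))=I(y,N(x))$. Combined with the surjectivity of $N$, this gives two facts: (a) $I$ is isotone in its second variable --- hence an honest fuzzy implication function --- by writing, for $s\le t$, $s=N(b)$ and $t=N(a)$ with $a\le b$ and applying \RCPN and \Ione; and (b) $I(a,\cdot)=I(b,\cdot)$ whenever $N(a)=N(b)$, by writing an arbitrary second argument in the form $N(y')$ and applying \RCPN twice. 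Since $N(\mathfrak{R}_N(N(x)))=N(x)$, fact (b) yields $U(N(x),y)=I(\mathfrak{R}_N(N(x)),y)=I(x,y)$, which is the desired representation.

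It then remains to verify the uninorm axioms for $U$. Commutativity follows by writing each argument of $U$ in the form $N(\cdot)$ (using surjectivity) and applying \RCPN together with $N\circ\mathfrak{R}_N=\text{id}_{[0,1]}$; monotonicity follows from \Ione, fact (a), and the monotonicity of $\mathfrak{R}_N$; the element $e$ is neutral because $U(x,e)=I(\mathfrak{R}_N(x),e)=N(\mathfrak{R}_N(x))=x$, and then $U(e,x)=x$ by commutativity; associativity follows from \EP, which first gives $U(x,U(y,z))=U(y,U(x,z))$ and then, combined with commutativity, full associativity; and $U$ is disjunctive since $\mathfrak{R}_N(1)=0$ forces $U(1,0)=I(0,0)=1$. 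This exhibits $I$ as the $(U,N)$-implication generated by $U$ and $N$.

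For the \emph{uniqueness} clause, suppose $I=I_{U,N}$ for a uninorm $U$ with neutral element $e$ and a continuous $N$. Arguing exactly as in (i)$\Rightarrow$(ii), $N=N_I^{e}$; and then $I(\mathfrak{R}_N(x),y)=U(N(\mathfrak{R}_N(x)),y)=U(x,y)$ by $N\circ\mathfrak{R}_N=\text{id}_{[0,1]}$, so $U$ is determined as well. The step I expect to be the main obstacle is fact (b) --- that $I$ factors through $N_I^{e}$ in its first coordinate --- since this is exactly where continuity of $N_I^{e}$ (via surjectivity of $N$) is indispensable, and it is what simultaneously makes the reconstruction $U=I(\mathfrak{R}_N(\cdot),\cdot)$ well defined and the identity $I=I_{U,N}$ valid.
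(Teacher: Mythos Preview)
Your argument for the equivalence (i)$\Leftrightarrow$(ii) is correct and essentially follows the approach of the original source \cite{Backzynski2009}. However, the \emph{uniqueness} clause you attempt to prove is \textbf{false}, and disproving it is precisely the point of the present paper (see Section~\ref{section:corrigendum} and Example~\ref{example:1}).

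The gap in your uniqueness argument is the following. You show that if $I=I_{U,N}$ with $U$ having neutral element $e$, then $N=N_I^{e}$ and $U(x,y)=I(\mathfrak{R}_N(x),y)$. This is correct, but it only establishes that the pair $(U,N)$ is determined \emph{once the neutral element $e$ is fixed}. You never argue that $e$ itself is uniquely determined by $I$. In fact it is not: there may exist $\alpha\neq e$ such that $N_I^{\alpha}$ is also a continuous negation, and then the very construction you use in (ii)$\Rightarrow$(i) produces a \emph{different} uninorm $U_{I,\alpha}$ with neutral element $\alpha$ and a different continuous negation $N_I^{\alpha}$ representing the same $I$ (cf.\ Proposition~\ref{proposition:2} and Corollary~\ref{cor:representations}). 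Example~\ref{example:1} exhibits this concretely with two representable uninorms having neutral elements $\tfrac{1}{2}$ and $\tfrac{1}{4}$. Your argument, like the original one in \cite{Backzynski2009}, implicitly assumes $e_1=e_2$ when comparing two representations; Proposition~\ref{proposition:1}(iii) confirms that uniqueness \emph{does} hold under that extra assumption, but not in general.
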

Differently from $(S,N)$-implications, in this case the boundary conditions \Ithree need to be imposed and the representation is given in terms of natural negation with respect to $e$. Moreover, in Theorem \ref{th:charac_un} the authors claim that, similarly to $(S,N)$-implications, the representation of the $(U,N)$-implication is unique. Nonetheless, in the upcoming sections we prove that this statement does not hold and we study the conditions under which uniqueness of representation is guaranteed.

\section{Corrigendum: non-uniqueness of the representation of $(U,N)$-implications even if $N$ is continuous}\label{section:corrigendum}

In this section, we recall the part of the proof of Theorem \ref{th:charac_un} that claims unicity of representation, we highlight in which step the reasoning of the authors does not hold and we then provide some general results that highlight when we can ensure that the representation of a $(U,N)$-implication is unique.

Let $I$ be a fuzzy implication functions and let us assume that there exist two continuous fuzzy negations $N_1$, $N_2$ and two uninorms $U_1$, $U_2$ with neutral elements $e_1, e_2 \in ]0,1[$, respectively, such that $I(x,y) = U_1(N_1(x),y) = U_2(N_2(x),y)$ for all $x,y \in [0,1]$. According to \cite{Backzynski2009}, in this case it must happen that $N_1 = N_2 = N_I^{e_1} = N_I^{e_2}$ and from there and with the assumption of the continuity of the negation they prove that $U_1=U_2$. Nonetheless, in the following example we show that this is not necessary the case. In particular, we later show that the equality $N_1 = N_2 = N_I^{e_1} = N_I^{e_2}$ holds only if $e_1=e_2$.



\begin{example}\label{example:1} Let $U_1,U_2:[0,1]^2 \to [0,1]$ and $N_1,N_2:[0,1] \to [0,1]$ be functions given by
$$N_1(x) = 1-x, \quad N_2(x) = \frac{1-x}{1+2x},$$
    $$U_1(x,y) = \begin{cases}
    \frac{xy}{xy+(1-x)(1-y)}   &\text{ if $(x,y)^2\in [0,1]^2\setminus \{(0,1),(1,0)\},$} \\
    1 &\text{otherwise,}\end{cases}$$
    $$U_2(x,y) = \begin{cases} \frac{3xy}{3xy+(1-x)(1-y)}  &\text{ if $(x,y)^2\in [0,1]^2\setminus \{(0,1),(1,0)\},$} \\ 1 &\text{otherwise.}\end{cases} $$
        Then $U_1$ is a disjunctive uninorm  with neutral element $e_1=\frac{1}{2},$ $U_2$ is a disjunctive uninorm  with neutral element
    $e_2=\frac{1}{4},$
    and $N_1$ and $N_2$ are continuous fuzzy negations.
Moreover, for $(x,y)^2\in [0,1]^2\setminus \{(0,0),(1,1)\},$ we have
$$U_1(N_1(x),y) = U_2(N_2(x),y) = \frac{(1-x)y}{(1-x)y+x(1-y)},$$
$U_1(N_1(1),1) = U_2(N_2(1),1) = 1$ and $U_1(N_1(0),0) = U_2(N_2(0),0) = 1.$
\end{example}

As stated before, the issue in the proof of Theorem \ref{th:charac_un} arises from assuming that $N_1=N_2$ under continuity conditions. In the following proposition, we show that to ensure uniqueness of representation, we must either know that $U_1=U_2$ or that the negations are continuous and $N_1=N_2$ or $e_1=e_2$.

\begin{proposition}\label{proposition:1} 
Let $I:[0,1]^2 \to [0,1]$ be a fuzzy implication function, $U_1,U_2:[0,1]^2 \to [0,1]$ two disjunctive uninorms with neutral elements $e_1,e_2 \in ]0,1[,$ respectively and $N_1,N_2:[0,1] \to [0,1]$ two fuzzy negations such that
$$I(x,y) = U_1(N_1(x),y) = U_2(N_2(x),y).$$
Then, the following statements hold:
\begin{enumerate}[label=(\roman*)]
    \item If $U_1=U_2$ then $N_1=N_2$.
    \item If $N_1,N_2$ are continuous and $N_1=N_2$ then $U_1=U_2$.
    \item If $e_1=e_2$ and $N_1$,$N_2$ are continuous then $U_1=U_2$ and $N_1=N_2$.
\end{enumerate}    
\end{proposition}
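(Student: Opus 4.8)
The three parts are of different character and I would tackle them in the order given, since later parts build intuition from the earlier ones.

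For part (i), the plan is a direct substitution argument. Setting $y=e_1=e_2$ (here $e_1=e_2$ since $U_1=U_2$ shares a neutral element) and using the neutral-element property \textbf{(U4)} gives $I(x,e_1)=U_1(N_1(x),e_1)=N_1(x)$ and likewise $I(x,e_2)=N_2(x)$; hence $N_1=N_I^{e_1}=N_I^{e_2}=N_2$. The only thing to check is that $U_1=U_2$ forces $e_1=e_2$, which follows because a uninorm has a unique neutral element (if $e,e'$ are both neutral, $e=U(e,e')=e'$).

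For part (ii), I would exploit the reconstruction formula already appearing in Theorem~\ref{th:charac_un}. Since $N:=N_1=N_2$ is continuous, Proposition~\ref{prop:RN} gives $N\circ\mathfrak{R}_N=\mathrm{id}_{[0,1]}$, so for every $x,y$ we can write $x=N(\mathfrak{R}_N(x))$ and compute
$$I(\mathfrak{R}_N(x),y)=U_k(N(\mathfrak{R}_N(x)),y)=U_k(x,y)$$
for both $k=1,2$. Therefore $U_1(x,y)=I(\mathfrak{R}_N(x),y)=U_2(x,y)$ for all $x,y\in[0,1]$, which is the claim. The one subtlety is that $\mathfrak{R}_N$ need not be surjective, but the displayed identity holds for all $x$ regardless, because $N\circ\mathfrak{R}_N$ is the identity on the whole of $[0,1]$ by Proposition~\ref{prop:RN}(ii); no surjectivity of $\mathfrak{R}_N$ is needed.

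For part (iii), the main point — and the step I expect to be the real obstacle — is to deduce $N_1=N_2$ from $e_1=e_2$ alone, \emph{without} assuming the negations are equal to begin with; once that is done, part (ii) finishes the job. Writing $e:=e_1=e_2$ and substituting $y=e$ into $I(x,y)=U_k(N_k(x),y)$ gives $N_1(x)=I(x,e)=N_2(x)$ directly from \textbf{(U4)}, so in fact $N_1=N_2=N_I^e$ with no continuity needed for this equality; continuity of the $N_k$ is then used only to invoke part (ii) and conclude $U_1=U_2$. So the plan for (iii) is: (a) show $N_1=N_2$ by evaluating at $y=e$; (b) apply part (ii). The thing to be careful about is making sure the common value $e$ is genuinely a neutral element for \emph{both} uninorms — which is exactly the hypothesis $e_1=e_2\in\,]0,1[$ — and that $N_I^e$ is well-defined as a negation, which is automatic here since it coincides with each $N_k$. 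If there is any hidden difficulty it would be in a version of the statement where one does not know a priori that the same $e$ works; but as stated, with $e_1=e_2$ given, the argument is short and the genuine content is concentrated in part (ii)'s reconstruction identity.
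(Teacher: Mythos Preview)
Your proposal is correct and follows essentially the same approach as the paper: parts (i) and (iii) are identical (evaluate at $y=e$ using \textbf{(U4)}, then invoke (ii)), and in part (ii) you use $N\circ\mathfrak{R}_N=\mathrm{id}_{[0,1]}$ to supply a preimage while the paper simply invokes surjectivity of the continuous negation directly---but this is the same idea in slightly different packaging.
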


\begin{proof}
\begin{enumerate}[label=(\roman*)]
    \item If  $U_1=U_2 = U$ is a disjunctive uninorm with neutral element $e \in ]0,1[$ then for each $x\in [0,1]$
    $$N_1(x)=U(N_1(x),e)=I(x,e)=U(N_2(x),e)=N_2(x).$$
    \item Let $x_0,y_0 \in [0,1]$, since $N$ is a continuous fuzzy negation for any $x_0 \in [0,1]$ there exists $x_1 \in [0,1]$ such that $N(x_1)=x_0$ and then
    $$U_1(x_0,y_0) = U_1(N(x_1),y_0) = I(x_1,y_0) = U_2(N(x_1),y_0) =
    U_2(x_0,y_0).$$
    Thus, $U_1=U_2$.
    \item If $e_1 = e_2 = e$ then for any $x\in [0,1]$
$$N_1(x)=U_1(N_1(x),e)=I(x,e)=U_2(N_2(x),e)=N_2(x),$$
and by Point (ii) we obtain $U_1 = U_2$.
\end{enumerate}
\end{proof}

In the case of considering non-continuous negations then it is clear that, similarly to $(S,N)$-implications, we also may not have a unique representation. Indeed, in this case since the fuzzy implication function is defined via the Eq. $U(N(x),y)$, the horizontal cuts are only evaluated in the range of $N$, which is not $[0,1]$ anymore. Below, we illustrate this with an explicit example.

\begin{example}
Let $U_1$ and $U_2$ be two disjunctive uninorms, with neutral elements $e_1,e_2,$ respectively, such that $U_1(0,x)=0=U_2(0,x)$ for all $x\in [0,1[$ . Let $N_i$ for $i\in \{1,2\}$ be two fuzzy negations given by $$N_i(x) = \begin{cases}
    1&\text{ if $x=0,$} \\
    0&\text{ if $x=1,$}\\
    e_i&\text{ otherwise.}\end{cases}$$ 
Then $U_1(N_1(x),y)=U_2(N_2(x),y)=I(x,y)$ for all $x,y\in [0,1],$ where $$I(x,y) = \begin{cases} 
    1 &\text{ if $x=0$ or $y=1,$}\\
    0 &\text{ if $x=1$ and $y\in [0,1[,$}\\
    y &\text{otherwise.}
    \end{cases}$$
    There are many uninorms that may be considered in this example, such as any disjunctive uninorm in $\mathcal{U}_{rep}$, any uninorm in
    of the form (ii) or (vi) in Theorem 3 from \cite{LiLiu2016} (including those with non-Archimedean continuous underlying functions).
    Note that all these uninorms have positive underlying t-conorms since otherwise $U(x,y)=1$ holds for some $x,y\in [e,1[$ and $$1=U(0,1)=U(0,U(x,y))=U(U(0,x),y)=U(0,y)=0,$$ which is a contradiction.

\end{example}

As a first step, in this paper we focus on investigating the properties of a disjunctive uninorm $U$ such that the representation of any $(U,N)$-implication function is unique  for  continuous fuzzy negations.

\section{On the unique representation of $(U,N)$-implications with a continuous negation}\label{section:uniqueness}

In this section, we first establish the conditions under which the representation of $(U,N)$-implications is unique under the continuity assumption. Subsequently, we examine uninorms that fail to satisfy these conditions.

\subsection{Uniqueness of representation}
We start by proving that if a $(U,N)$-implication generated by a continuous fuzzy negation does not have a unique representation, i.e., it is generated by two uninorms and two fuzzy negations, where either uninorms or negations
 differ from each other, then both uninorms and negations have to be necessarily different and, moreover, the uninorms have a continuous horizontal cut in  a point different from the corresponding neutral element.

\begin{proposition}\label{prop:non_uniq}
    Let $N_1, N_2:[0,1] \to [0,1]$  be two continuous fuzzy negations, $U_1, U_2:[0,1]^2 \to [0,1]$ two disjunctive uninorms with neutral elements $e_1,e_2 \in ]0,1[,$ respectively,
     and $I_{U_1,N_1}, I_{U_2,N_2}$ the corresponding $(U,N)$-implications.
     If $I_{U_1,N_1} = I_{U_2,N_2}$ and either $e_1\neq e_2,$ or $N_1\neq N_2,$ or $U_1\neq U_2$ then 
\begin{enumerate}[label=(\roman*)]
     \item $e_1\neq e_2$ and $N_1\neq N_2,$ and $U_1\neq U_2,$
     \item $f_1\colon [0,1]\longrightarrow [0,1]$ given by $f_1(x)=U_1(x,e_2)$ for $x\in [0,1],$ is a continuous, increasing function such that $f_1(0)=0$ and $f_1(1)=1,$ 
     \item  $f_2\colon [0,1]\longrightarrow [0,1]$ given by $f_2(x)=U_2(x,e_1)$ for $x\in [0,1],$ is a continuous, increasing function such that $f_2(0)=0$ and $f_2(1)=1$. 
 \end{enumerate}
\end{proposition}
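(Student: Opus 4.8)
The plan is to start from the identity $I_{U_1,N_1}=I_{U_2,N_2}$ and exploit the horizontal cut at $y=e_1$ and $y=e_2$. Since $U_2$ has neutral element $e_2$, evaluating at $y=e_2$ gives $I(x,e_2)=U_2(N_2(x),e_2)=N_2(x)$, while $I(x,e_2)=U_1(N_1(x),e_2)=f_1(N_1(x))$ with $f_1(x)=U_1(x,e_2)$. Symmetrically, $I(x,e_1)=N_1(x)=f_2(N_2(x))$ with $f_2(x)=U_2(x,e_1)$. So the first order of business is to record the two key functional identities
\begin{equation}\label{eq:key1}
N_2 = f_1\circ N_1,\qquad N_1 = f_2\circ N_2.
\end{equation}
From \eqref{eq:key1} we immediately get $N_1 = f_2\circ f_1\circ N_1$ and $N_2 = f_1\circ f_2\circ N_2$; since $N_1$ is a continuous negation its range is all of $[0,1]$, so $f_2\circ f_1=\mathrm{id}_{[0,1]}$, and likewise $f_1\circ f_2=\mathrm{id}_{[0,1]}$. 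Hence $f_1$ is a bijection of $[0,1]$ with inverse $f_2$. Because $f_1(x)=U_1(x,e_2)$ is nondecreasing in $x$ (monotonicity of $U_1$) and is a bijection of $[0,1]$, it must be strictly increasing; a monotone bijection of $[0,1]$ onto $[0,1]$ is automatically continuous, and it satisfies $f_1(0)=U_1(0,e_2)$ and $f_1(1)=U_1(1,e_2)$. Here I would argue $f_1(1)=1$: since $U_1$ is disjunctive, $U_1(1,e_2)\ge\max\{1,e_2\}=1$; and $f_1(0)=0$ follows because $f_1$ is a bijection onto $[0,1]$ and $f_1$ is strictly increasing with $f_1(1)=1$, forcing $f_1(0)=0$. (Alternatively $U_1(0,e_2)=0$ can be derived directly once one knows $f_1$ is a bijection.) This proves (ii), and (iii) is the same argument with indices swapped.

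For (i), I would proceed by contraposition within the hypothesis "$e_1\ne e_2$ or $N_1\ne N_2$ or $U_1\ne U_2$". Proposition \ref{proposition:1} already does most of the work: part (i) there says $U_1=U_2\Rightarrow N_1=N_2$; part (ii) says ($N_1,N_2$ continuous and $N_1=N_2$) $\Rightarrow U_1=U_2$; part (iii) says ($e_1=e_2$ and continuity) $\Rightarrow U_1=U_2$ and $N_1=N_2$. So if any two of the three equalities $e_1=e_2$, $N_1=N_2$, $U_1=U_2$ held, all three would hold, contradicting the standing assumption that at least one fails. It remains to rule out the "exactly one equality" configurations: if $N_1=N_2$ (continuous) then by \ref{proposition:1}(ii) $U_1=U_2$, and then $N_1(x)=U_1(N_1(x),e_1)=I(x,e_1)$ forces $e_1=e_2$ by evaluating the cut (or: $e_1=U_1(1,e_1)$-type boundary identities); if $U_1=U_2$ then \ref{proposition:1}(i) gives $N_1=N_2$ and we are back to the previous case; and if $e_1=e_2$, \ref{proposition:1}(iii) gives both other equalities. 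In every branch we contradict the hypothesis, so none of the three equalities can hold in isolation, i.e. all three fail simultaneously, which is exactly (i).

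The main obstacle I anticipate is the clean extraction of the boundary values $f_i(0)=0$ and $f_i(1)=1$, and — more subtly — making sure the "$N_1=N_2$ forces $e_1=e_2$" step in (i) is airtight. The point is that knowing $N_1=N_2$ and $U_1=U_2=U$ we have $N_1(x)=I(x,e_1)=U(N_1(x),e_1)$ for all $x$; since $N_1$ is onto $[0,1]$, this says $U(t,e_1)=t$ for all $t\in[0,1]$, i.e. $e_1$ is a neutral element of $U$, and since the neutral element of a uninorm is unique, $e_1=e_2$. That uniqueness-of-neutral-element remark is the small fact that closes the loop; everything else is bookkeeping with monotonicity, surjectivity of continuous negations, and the disjunctivity bound $U\ge\max$.
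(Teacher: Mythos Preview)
Your proof is correct and, for parts (ii) and (iii), takes a genuinely different route from the paper. The paper establishes the boundary values $f_1(0)=0$, $f_1(1)=1$ by direct computation through the implication identity (e.g.\ $f_1(0)=U_1(N_1(1),e_2)=I(1,e_2)=U_2(N_2(1),e_2)=0$) and then proves continuity of $f_1$ by contradiction: a jump in $f_1$ at some point $a$ would, via $N_2=f_1\circ N_1$ and the surjectivity of $N_1$, force a discontinuity in $N_2$. You instead combine the two identities $N_2=f_1\circ N_1$ and $N_1=f_2\circ N_2$ to conclude $f_1\circ f_2=f_2\circ f_1=\mathrm{id}_{[0,1]}$, so $f_1$ is a monotone bijection of $[0,1]$ and hence automatically continuous with $f_1(0)=0$, $f_1(1)=1$. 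Your argument is slicker and yields the extra information that $f_1$ is strictly increasing with inverse $f_2$; the paper's argument is more pedestrian but entirely self-contained. One small caveat: your line ``since $U_1$ is disjunctive, $U_1(1,e_2)\ge\max\{1,e_2\}$'' invokes the wrong notion of disjunctivity---for uninorms, disjunctive means $U(1,0)=1$, not $U\ge\max$---but as you note this step is redundant once bijectivity is in hand. For part (i) both proofs simply invoke Proposition~\ref{proposition:1}; your case analysis is just a more explicit unpacking of what the paper leaves implicit, and your closing worry about ``$N_1=N_2\Rightarrow e_1=e_2$'' is overcautious: once $U_1=U_2$, uniqueness of the neutral element gives $e_1=e_2$ immediately.
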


\begin{proof} 
Point (i) follows from Proposition
\ref{proposition:1}. 
Since function $f_1$ is defined as
$f_1(x) = U_1(x,e_2)$ for all $x \in [0,1],$ it is clear that $f_1$ is increasing and since $I_{U_1,N_1} = I_{U_2,N_2}$ we have
$$
f_1(0) = U_1(0,e_2) = U_1(N_1(1),e_2) = I_{U_1,N_1}(1,e_2) = I_{U_2,N_2}(1,e_2) = U_2(N_2(1),e_2) = N_2(1)=0,
$$
$$
f_1(1) = U_1(1,e_2) = U_1(N_1(0),e_2) = I_{U_1,N_1}(0,e_2) = I_{U_2,N_2}(0,e_2) = U_2(N_2(0),e_2) = N_2(0)=1.
$$
Next, let us assume that $f_1$ is not continuous, i.e., there exists $a \in [0,1]$ such that $\lim_{x \to a} f_1(x) \not = f_1(a) = U_1(a,e_2)$. If $\lim_{x \to a^{-}} f_1(x) \not = f_1(a)$ then there exists $r \in [0,1]$ such that
$$
U_1(s,e_2) < r < U_1(a,e_2) \quad \text{for all } s < a.
$$
Denote $\tilde{x} = \sup\{x\in [0,1]\mid N_1(x)=a\}.$
Since $N_1$ is a continuous decreasing function then $N_1(\tilde{x})=a$ and  $N_1(x) < a$ for all $x > \tilde{x}$. Moreover,
$$
U_1(N_1(\tilde{x}),e_2) = U_1(a,e_2),
$$
$$
U_1(N_1(x),e_2) <r, \quad \text{for all } x > \tilde{x}.
$$
Since $$
U_1(N_1(x),e_2) = U_2(N_2(x),e_2) = N_2(x), \quad \text{for all } x \in [0,1],
$$
 we obtain
$$
\lim_{x \to \tilde{x}^{+}} N_2(x)=\lim_{x \to \tilde{x}^{+}} U_1(N_1(x),e_2) \leq r < U_1(N_1(\tilde{x}),e_2) =N_2(\tilde{x}),
$$
which is a contradiction with the fact that $N_2$ is a continuous function.

On the other hand, if $\lim_{x \to a^{+}} f_1(x) \not = f_1(a)$ we arrive to contradiction in an analogous manner. Thus, $f_1$ is a continuous function. Finally, Point (iii) can be proven similarly to Point (ii). 
\end{proof}

Thanks to the proposition above, we can directly affirm that the representation of a $(U,N)$-implication with a fuzzy continuous negation $N$ and a uninorm $U$ is not unique if and only if $U$ has a continuous horizontal cut, in a point different from the neutral element of $U$, whose range is $[0,1]$.

\begin{proposition}\label{proposition:2}
    Let $N:[0,1] \to [0,1]$ be a continuous fuzzy negation, $U$ a disjunctive uninorm with neutral element $e \in ]0,1[$
     and $I$ the corresponding $(U,N)$-implication. Then, the representation of $I$ as a $(U,N)$-implication with a continuous fuzzy negation is not unique if and only if there exists an $\alpha \in ]0,1[$, $\alpha \not = e$ such that $f(x)=U(\cdot,\alpha)$ is a continuous, increasing function with $f(0)=0$ and $f(1)=1$.
\end{proposition}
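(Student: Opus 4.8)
The plan is to prove both directions by leveraging Proposition 4.3 for the forward implication and a direct construction for the converse. For the ``only if'' direction, suppose the representation of $I$ is not unique, so there exist a continuous fuzzy negation $N'$ and a disjunctive uninorm $U'$ with neutral element $e' \in \,]0,1[$ such that $I_{U',N'} = I = I_{U,N}$, where either $e \neq e'$, or $N \neq N'$, or $U \neq U'$. Apply Proposition 4.3 with $(N_1,U_1,e_1) = (N,U,e)$ and $(N_2,U_2,e_2) = (N',U',e')$: by Point (i) we get $e \neq e'$, and by Point (ii) the map $f(x) = U(x,e')$ is continuous, increasing, with $f(0)=0$ and $f(1)=1$. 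Setting $\alpha = e' \in \,]0,1[$ and noting $\alpha = e' \neq e$, this gives exactly the claimed horizontal cut.

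For the ``if'' direction, assume there is an $\alpha \in \,]0,1[$ with $\alpha \neq e$ such that $f(x) = U(x,\alpha)$ is continuous, increasing, $f(0)=0$, $f(1)=1$. The idea is to build a second representation $(U',N')$ of $I$ with neutral element $e' = \alpha$. Define $N'(x) = I(x,\alpha) = U(N(x),\alpha) = f(N(x))$; since $f$ and $N$ are continuous with the right boundary values, $N'$ is a continuous fuzzy negation, and in fact $N' = N_I^\alpha$. The candidate uninorm is defined, following the recipe of Theorem 2.3, by $U'(x,y) = I(\mathfrak{R}_{N'}(x),y)$ for $x,y \in [0,1]$. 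The key verification is that this $U'$ is a disjunctive uninorm with neutral element $\alpha$ and that $I_{U',N'} = I$. For the latter, one checks $U'(N'(x),y) = I(\mathfrak{R}_{N'}(N'(x)),y)$ and uses that $f$ being a continuous increasing surjection forces $\mathfrak{R}_{N'}(N'(x))$ to agree with $x$ on the relevant set, together with monotonicity of $I$ in the first argument to handle the flat parts of $N'$; the boundary points $x \in \{0,1\}$ are checked separately using $I(0,y)=1$ and $N_I$-type identities. Finally, one must confirm $(U',N') \neq (U,N)$ in the required sense — this is immediate since the neutral element $\alpha \neq e$.

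The main obstacle I expect is the ``if'' direction: showing that $U' = I(\mathfrak{R}_{N'}(\cdot),\cdot)$ is genuinely a uninorm (associativity and the neutral-element property with $e' = \alpha$) and recovers $I$. Associativity of $U'$ does not follow formally from \EP of $I$ alone without care, because $I$ only knows $U$ through a negation whose range may be a proper subinterval; one must use that $U$ has continuous underlying functions near $\alpha$, or argue via Theorem 2.3 applied to $N_I^\alpha$ directly, checking that $I$ satisfies \Ione, \Ithree, \EP and that $N_I^\alpha = N'$ is continuous — all of which hold because $I = I_{U,N}$ already is a $(U,N)$-implication and $N'$ was just shown continuous. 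Invoking Theorem 2.3 in this way sidesteps re-proving associativity from scratch and yields that $I$ is a $(U',N')$-implication with $U'$ a disjunctive uninorm of neutral element $\alpha$, completing the argument.
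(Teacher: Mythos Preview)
Your proposal is correct and follows essentially the same route as the paper: the forward direction is exactly Proposition~4.3, and for the converse both you and the paper set $N' = N_I^\alpha$ and $U'(x,y) = I(\mathfrak{R}_{N'}(x),y)$, then appeal to the machinery behind Theorem~2.3 to conclude that $U'$ is a disjunctive uninorm with neutral element $\alpha$ representing $I$. The only point worth flagging is your first suggestion to handle the flat parts of $N'$ via ``monotonicity of $I$ in the first argument'': that alone does not give $I(x,y)=I(\tilde{x},y)$ when $N'(x)=N'(\tilde{x})$; the paper does this step explicitly via right-contraposition \textbf{(R$\mhyphen$CP($N_I^\alpha$))}, but your fallback of invoking Theorem~2.3's (ii)$\Rightarrow$(i) direction as a black box with $e=\alpha$ is sound and subsumes that verification.
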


\begin{proof}
    \begin{itemize}
    \item[$(\Rightarrow)$] Directly from Proposition \ref{prop:non_uniq}.
    \item[$(\Leftarrow)$] By hypothesis, function $N_I^\alpha(x) = U(N(x),\alpha)$ is a continuous fuzzy negation. Moreover, 
    by   \cite[Proposition 5.2]{Backzynski2009}  $(U,N)$-implication $I$ satisfies (EP).
        Now, let us define
    $$U_{I,\alpha}(x,y) = I(\mathfrak{R}_{N_I^{\alpha}}(x),y) \quad \text{for all } x,y \in [0,1].$$
    We know by \cite[Corollary 6.3]{Backzynski2009} that $U_{I,\alpha}$ is a disjunctive uninorm with neutral element $\alpha$. Now, to prove that $U_{I,\alpha}(N_I^\alpha(x),y) = I(x,y)$ for all $x,y\in [0,1]$ we need to distinguish between two cases:
    \begin{itemize}
        \item If $x \in \text{Ran } \mathfrak{R}_{N_I^{\alpha}}$ then by (iii)-Proposition \ref{prop:RN} we have that
        $$U_{I,\alpha}(N_I^\alpha(x),y) = I(\mathfrak{R}_{N_I^{\alpha}} \circ N_I^\alpha (x),y) = I(x,y).$$
        \item  If $x \not \in \text{Ran } \mathfrak{R}_{N_I^{\alpha}}$, from the continuity of $N_I^\alpha$ we know that there exists $\tilde{x} \in \text{Ran } \mathfrak{R}_{N_I^{\alpha}}$ such that $N_I^\alpha(x) = N_I^\alpha(\tilde{x})$. For $y \in [0,1]$, again by the continuity of $N_I^\alpha$ there exists $\tilde{y} \in [0,1]$ such that $N_I^\alpha(\tilde{y}) = y$ and since $I$ satisfies {\bf(R$\mhyphen$CP($N_I^{\alpha}$))}  (see \cite[Lemma 3.5]{Backzynski2009}) we have
        $$
        I(x,y) = I(x,N_I^\alpha(\tilde{y}))
        = I(\tilde{y},N_I^\alpha(x))=I(\tilde{y},N_I^\alpha(\tilde{x})) = I(\tilde{x},N_I^\alpha(\tilde{y})) = I(\tilde{x},y).
        $$
        Thus, 
        $$U_{I,\alpha}(N_I^\alpha(x),y) = U_{I,\alpha}(N_I^\alpha(\tilde{x}),y) = I(\tilde{x},y)=I(x,y).$$
    \end{itemize}
    \end{itemize}
\end{proof}

By the proof of Proposition \ref{proposition:2} we can deduce that horizontal cuts totally determine the representations of $(U,N)$-implications with continuous negations.

\begin{corollary}\label{cor:representations} 
Let $I$ be a $(U,N)$-implication generated from some uninorm $U$ with neutral element $e \in ]0,1[$ and some continuous fuzzy negation $N$. Then, any representation of $I$ as a $(U,N)$-implication with a uninorm $U^*$ and a continuous fuzzy negation $N^*$ is given by
$$
N^*(x) = N_I^{\alpha}(x), \quad x \in [0,1], \quad U^*(x,y) = I (\mathfrak{R}_N(x),y), \quad x,y \in [0,1],
$$
where $\alpha \in (0,1)$ and the horizontal cut $I(\cdot,\alpha)$ is a fuzzy continuous negation. In particular, at least $\alpha=e$ fulfills these conditions and the representation is unique if and only if $I(\cdot,e)$ is the only horizontal cut which is a continuous fuzzy negation.
\end{corollary}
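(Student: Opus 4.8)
The plan is to assemble Corollary~\ref{cor:representations} as a direct synthesis of Proposition~\ref{prop:non_uniq} (via its consequence Proposition~\ref{proposition:2}) together with the explicit reconstruction formulas that already appear in the proof of Proposition~\ref{proposition:2}. Concretely, I would argue that every representation of $I$ must arise from a horizontal cut $I(\cdot,\alpha)$ that happens to be a continuous fuzzy negation, and conversely every such cut produces a valid representation; the formulas for $N^*$ and $U^*$ are then read off from the construction $U_{I,\alpha}(x,y)=I(\mathfrak{R}_{N_I^\alpha}(x),y)$ used in that proof.

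First I would prove the ``only if'' direction of the representation formula. Suppose $I=I_{U^*,N^*}$ for some disjunctive uninorm $U^*$ with neutral element $\alpha\in\,]0,1[$ and some continuous fuzzy negation $N^*$. Evaluating at the neutral element gives $I(x,\alpha)=U^*(N^*(x),\alpha)=N^*(x)$ for all $x\in[0,1]$, so $N^*=N_I^\alpha$ and in particular $N_I^\alpha$ is a continuous fuzzy negation. Applying Proposition~\ref{prop:RN}(iii) and the fact that, since $N^*$ is continuous, every point of $[0,1]$ has the same $U^*$-behaviour as a point of $\mathrm{Ran}\,\mathfrak{R}_{N^*}$ (exactly the case distinction carried out in the proof of Proposition~\ref{proposition:2}, using {\bf(R-CP($N_I^\alpha$))}), we recover $U^*(x,y)=I(\mathfrak{R}_{N^*}(x),y)=I(\mathfrak{R}_{N_I^\alpha}(x),y)$ for all $x,y\in[0,1]$. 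Thus the representation is completely determined by the choice of $\alpha$, and $\alpha$ must be a point where the horizontal cut $I(\cdot,\alpha)$ is a continuous fuzzy negation. Note $\alpha=e$ always works, since $I(\cdot,e)=N_I^e=N$ is continuous by hypothesis, which gives the ``at least $\alpha=e$'' claim and the original representation from Theorem~\ref{th:charac_un}.

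Next I would prove the converse: if $\alpha\in\,]0,1[$ is such that $I(\cdot,\alpha)$ is a continuous fuzzy negation, then setting $N^*=N_I^\alpha$ and $U^*=U_{I,\alpha}$ as above yields $I=I_{U^*,N^*}$. For $\alpha=e$ this is trivial; for $\alpha\neq e$ this is precisely the content of the $(\Leftarrow)$ implication of Proposition~\ref{proposition:2}, whose proof already establishes that $U_{I,\alpha}$ is a disjunctive uninorm with neutral element $\alpha$ (by \cite[Corollary 6.3]{Backzynski2009}) and that $U_{I,\alpha}(N_I^\alpha(x),y)=I(x,y)$. So I would simply invoke that argument rather than repeat it. Combining the two directions gives a bijection between representations of $I$ (with continuous negation) and values $\alpha\in\,]0,1[$ for which $I(\cdot,\alpha)$ is a continuous fuzzy negation. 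The uniqueness equivalence then follows immediately: the representation is unique precisely when $\alpha=e$ is the only such value, because $f(x)=U(x,\alpha)$ being a continuous increasing function with $f(0)=0$, $f(1)=1$ is exactly the condition that $N_I^\alpha=U(N(\cdot),\alpha)=f\circ N$ is a continuous fuzzy negation (composition of a continuous negation with a continuous increasing surjection of $[0,1]$), matching the criterion of Proposition~\ref{proposition:2}.

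I do not expect a genuine obstacle here, since all the heavy lifting is done in Propositions~\ref{prop:non_uniq} and \ref{proposition:2}; the only point requiring a little care is the bookkeeping that links ``$I(\cdot,\alpha)$ is a continuous fuzzy negation'' (the clean statement of the corollary) with ``$U(\cdot,\alpha)$ is continuous, increasing, with the right boundary values'' (the form in Proposition~\ref{proposition:2}), and checking that the reconstruction $U^*(x,y)=I(\mathfrak{R}_N(x),y)$ indeed uses the \emph{original} negation $N$ and implication $I$ rather than $N^*$ — which is legitimate because $\mathfrak{R}_{N^*}=\mathfrak{R}_{N_I^\alpha}$ and the identity $U^*(x,y)=I(\mathfrak{R}_{N_I^\alpha}(x),y)$ holds regardless of which generating pair we started from. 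Making that substitution transparent, and stating the final equivalence cleanly, is essentially the whole write-up.
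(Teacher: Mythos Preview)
Your approach is correct and matches the paper's: the corollary is presented there with no separate proof, as an immediate consequence of the argument in Proposition~\ref{proposition:2}, and your two-direction write-up (evaluate at the neutral element to identify $N^*=N_I^\alpha$; then invoke the $(\Leftarrow)$-construction $U_{I,\alpha}(x,y)=I(\mathfrak{R}_{N_I^\alpha}(x),y)$ for the converse) is exactly the unpacking the paper has in mind. Your closing remark about the bookkeeping between $\mathfrak{R}_N$ and $\mathfrak{R}_{N_I^\alpha}$ is well-placed; the formula as printed should indeed be read with $\mathfrak{R}_{N_I^\alpha}$, and your derivation recovers that.
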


Corollary \ref{cor:representations} jointly with the first part of Theorem \ref{th:charac_un} characterizes $(U,N)$-implications with a continuous fuzzy negation and their representations. Moreover, it shows that only fuzzy implication functions with a unique horizontal cut which is a fuzzy negation have uniqueness of representation. If we assume that this fuzzy implication function is generated by a continuous fuzzy negation $N$ and a disjunctive uninorm $U$, then the equivalent condition is that the only continuous, increasing horizontal cut with $U(0,\alpha)=0$ and $U(1,\alpha)=1$ is the one corresponding to the neutral element of the uninorm $\alpha=e$.
\begin{example}\label{example:unique_rep}
Let us consider $e \in ]0,1[$, $N$ a continuous fuzzy negation and $U$ a disjunctive uninorm given by
$$U(x,y)=\begin{cases}
1 &\text{if $\max(x,y)=1,$} \\
\max(x,y) &\text{if $x,y\in ]e,1[,$}\\
0 &\text{if $x,y\in [0,e[,$}\\
x &\text{if $y=e,$}\\
y &\text{if $x=e,$}\\
\min(x,y) &\text{otherwise,}\\
\end{cases}$$
i.e., the underlying functions of $U$ are
 the maximum  t-conorm and the drastic product t-norm. In this case, the horizontal cuts are the following
 $$
U(x,\alpha)=1  \quad
    \text{for $\alpha =1$,}$$
$$
U(x,\alpha) = \begin{cases} 
    x &\text{ if $x < e$,}\\
    \alpha &\text{ if $e\leq x\leq \alpha$,}\\
     x &\text{ if $x > \alpha$,}\\
    \end{cases}
    \quad
    \text{for all $\alpha > e$,}
$$
$$
U(x,\alpha) = \begin{cases} 
    0 &\text{ if $x < e$,}\\
    \alpha &\text{ if $e \leq x < 1$,}\\
    1 &\text{ if $x=1$,}\\
    \end{cases}
    \quad
    \text{for all $\alpha<e$.}
$$
Since not any horizontal cut except for $y = e$ is continuous with range $[0,1]$, the fuzzy implication function $I(x,y)=U(N(x),y)$ has a unique representation as a $(U,N)$-implication with a continuous fuzzy negation.
\end{example}

In the following subsections, we will denote by $\mathcal{U}_x^{hc}$
the set of all disjunctive uninorms defined on $[0,1],$ 
which have a continuous horizontal cut, with range $[0,1],$ in point $x\in [0,1]$ different from neutral element and we 
will study uninorms from  $\mathcal{U}_x^{hc}$. We will first consider uninorms with continuous underlying functions, which were characterized in \cite{Mesiarova2017OS} and then we will focus on uninorms with non-continuous underlying functions, whose complex structure does not allow for complete characterization.


\subsection{Uninorms with continuous underlying functions}\label{subsec:cont_und}

In this section, apart from the continuity of the fuzzy negation we also assume that the involved uninorm has continuous underlying functions. In this case, the representation of the corresponding $(U,N)$-implication is not unique and it is determined by any uninorm which is a linear transformation of some representable uninorm inside a sub-square of $[0,1]^2$ which is delimited by idempotent points.

\begin{proposition} \label{proposition:3}
   Let $N_1, N_2:[0,1] \to [0,1]$  be two continuous fuzzy negations, $U_1, U_2:[0,1]^2 \to [0,1]$ two disjunctive uninorms with neutral elements $e_1,e_2 \in ]0,1[,$ respectively, and let
 $I(x,y)=U_1(N_1(x),y)=U_2(N_2(x),y).$
 If $U_1$ has continuous underlying functions
and $N_1\neq N_2$ then also $U_2$ has continuous underlying functions
and there exist idempotent points $a,d\in [0,1]$
such that the restriction of $U_1$
 (resp. of $U_2$)
to $]a,d[^2$ is a linear transformation of some representable uninorm restricted to $]0,1[^2$. Moreover, $U_1$ coincides with $U_2$ on $([0,a]\cup [d,1])^2.$
\end{proposition}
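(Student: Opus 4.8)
The plan is to work from the structural description of the representations established in Corollary~\ref{cor:representations}. Since $N_1 \neq N_2$, Proposition~\ref{prop:non_uniq} tells us $e_1 \neq e_2$; without loss of generality assume $e_1 < e_2$. Corollary~\ref{cor:representations} says that $U_2$ arises as a horizontal-cut representation of $I = I_{U_1,N_1}$, namely $N_2 = N_I^{e_2}$ and $U_2(x,y) = I(\mathfrak{R}_{N_1}(x),y)$, and conversely that $U_1$ is the representation built from the cut at $e_1$. The first task is to translate what it means for the cut $f(x) = U_1(x,e_2)$ to be continuous, increasing, with $f(0)=0$, $f(1)=1$, given that $U_1 \in \mathbf{COU}$: using the block decomposition~\eqref{eq:uninorm} of $U_1$ around $e_1$, the horizontal cut at a level $e_2 > e_1$ is governed by the underlying t-conorm $S_{U_1}$ on the sub-block $[e_1,1]^2$ together with the behaviour of $C$; continuity and surjectivity of this cut force the relevant portion of $S_{U_1}$ to be Archimedean-like, i.e.\ to have no idempotent points strictly between the appropriate endpoints. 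I would make this precise by locating the largest idempotent point $a \leq e_1$ of $U_1$ and the smallest idempotent point $d \geq e_2$ of $U_1$ (these exist since $0$ and $1$ are idempotent), and showing that on $]a,d[^2$ the uninorm $U_1$ has no interior idempotent points at all — otherwise the cut at $e_2$ would have a flat piece of the wrong kind or fail to be surjective onto $[0,1]$.

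Next I would invoke the classification of uninorms in $\mathbf{COU}$ (the characterization from \cite{Mesiarova2017OS}): a uninorm with continuous underlying functions is built, via an ordinal-sum-type construction, from representable uninorms and from t-norm/t-conorm pieces glued along idempotent points. Restricting such a uninorm to a maximal square $]a,d[^2$ delimited by consecutive idempotent points $a,d$ and containing no interior idempotent point yields, by that classification, exactly a linear rescaling of a representable uninorm on $]0,1[^2$ (the neutral element $e_1$ sitting at the image of the representable uninorm's neutral element). So once I have shown that $[a,d]$ is idempotent-point-free in the open interior and contains both $e_1$ and $e_2$, the structure statement for $U_1$ is immediate. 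For $U_2$: since $U_2(x,y) = I(\mathfrak{R}_{N_1}(x),y)$ and $I$ is the same implication, $U_2$ is obtained from $U_1$ by precomposition in the first argument with the continuous decreasing map $\mathfrak{R}_{N_1} \circ (\text{something})$; more directly, by symmetry of the hypotheses one reruns the cut argument with the roles of $e_1$ and $e_2$ swapped — the cut of $U_2$ at level $e_1$ is continuous, increasing and surjective by Proposition~\ref{prop:non_uniq}(iii), so $U_2 \in \mathbf{COU}$ as well (a uninorm all of whose off-diagonal horizontal cuts relevant here are continuous cannot have a non-continuous underlying t-conorm or t-norm on the block meeting that cut), and the same maximal idempotent-free square $]a,d[$ works because the set of idempotent points of a uninorm is exactly the set of fixed points of all horizontal cuts, which is determined by $I$ alone.

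Finally, to see that $U_1$ and $U_2$ coincide on $([0,a]\cup[d,1])^2$: on this region both uninorms are ordinal-sum pieces sitting below $a$ or above $d$, and on each such piece the relevant horizontal cut $I(\mathfrak{R}_{N_i}(\cdot),y)$ only samples $N_i$ inside an interval where $N_1$ and $N_2$ — being the two continuous negations reconstructed from the \emph{same} $I$ via different horizontal cuts — must already agree after the appropriate change of variables; more cleanly, for $(x,y) \in ([0,a]\cup[d,1])^2$ one picks $x' $ with $N_1(x') = x$ (possible since $N_1$ is continuous and, outside $]a,d[$, the cut levels are attained), and then $U_1(x,y) = I(x',y) = U_2(N_2(x'),y) = U_2(x,y)$ provided $N_2(x') = x$; the point is that $N_1$ and $N_2$ agree precisely off the "active" interval, which corresponds under the negations to the complement of $]a,d[$. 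I expect the main obstacle to be exactly this last bookkeeping — pinning down that the two reconstructed negations agree outside the critical interval and that the idempotent points $a,d$ can be chosen uniformly for $U_1$ and $U_2$ — together with the careful use of the $\mathbf{COU}$ classification to go from "no interior idempotent points in $]a,d[^2$" to "linear transformation of a representable uninorm". The continuity arguments themselves (ruling out jumps in the cuts, as in the proof of Proposition~\ref{prop:non_uniq}) are routine once the framework is set up.
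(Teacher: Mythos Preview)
Your overall strategy aligns with the paper's: exploit the \textbf{COU} structure theory on $U_1$ to isolate a maximal idempotent-bounded interval $]a,d[$ on which $U_1$ is a rescaled representable uninorm, then transfer the conclusions to $U_2$ and show the two uninorms agree outside $]a,d[^2$. The identification of $a,d$ and the passage from ``no interior idempotents in $]a,d[$'' to ``rescaled representable'' is indeed handled in the paper via the cited results of Mesiarov\'a-Zem\'ankov\'a and Ruiz, much as you outline.

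However, there is a real gap in your treatment of $U_2$. You write ``by symmetry of the hypotheses one reruns the cut argument with the roles of $e_1$ and $e_2$ swapped'' and conclude ``so $U_2 \in \mathbf{COU}$ as well''. The hypotheses are \emph{not} symmetric: only $U_1$ is assumed to lie in \textbf{COU}, and membership of $U_2$ in \textbf{COU} is one of the conclusions to be established. Proposition~\ref{prop:non_uniq} gives you only that the \emph{single} cut $U_2(\cdot,e_1)$ is continuous and surjective; a single continuous surjective cut does not force continuous underlying functions (Example~\ref{ex.nonc} in the paper exhibits uninorms outside \textbf{COU} with such cuts). The paper does not invoke the \textbf{COU} classification for $U_2$ at all. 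Instead it first uses the structure already obtained for $U_1$ --- in particular that $U_1=\min$ on $[0,a]\times[a,d[$ and $U_1=\max$ on $[d,1]\times\,]a,d]$ --- to deduce $N_2(x)=U_1(N_1(x),e_2)=N_1(x)$ whenever $N_1(x)\in[0,a]\cup[d,1]$, which immediately yields $U_1=U_2$ on first coordinates in $[0,a]\cup[d,1]$. For $x\in\,]a,d[$ it uses surjectivity of $U_1(\cdot,x)$ onto $]a,d[$ to produce, via the equality of implications, a $z$ with $U_2(z,x)=e_2$; associativity then forces \emph{every} cut $U_2(\cdot,x)$ with $x\in\,]a,d[$ to have range $[0,1]$, hence to be continuous by monotonicity. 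Only with this in hand can one conclude that $U_2$ restricted to $]a,d[^2$ is again rescaled representable, and therefore that $U_2\in\textbf{COU}$.

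Relatedly, your assertion that ``the set of idempotent points of a uninorm \ldots\ is determined by $I$ alone'' is not justified and does not obviously hold; the paper checks by hand that $a$ and $d$ are idempotent for $U_2$, using that $N_1$ and $N_2$ agree at the preimages of $a$ and $d$.
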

\begin{proof}
From Proposition \ref{proposition:2} we know that $N_1\neq N_2$ implies $U_1\neq U_2$ and $e_1\neq e_2.$  Moreover,  
$$N_2(x) = U_2(N_2(x),e_2) = I(x,e_2)=U_1(N_1(x),e_2),$$
and from Proposition \ref{prop:non_uniq} we know that $U_1(\cdot,e_2)$ is a continuous function with range $[0,1].$
Therefore, there exists $x_e\in [0,1]$ such that $U_1(x_e,e_2) = e_1.$
Observe that since $U_1(e_1,e_2) =e_2$ then $e_1>e_2$ ($e_1<e_2$) implies $x_e>e_1$ ($x_e<e_1$).
Due to \cite[Proposition 8]{Mesiarova2018} we know that  there exist idempotent points $a,d\in [0,1]$ of $U_1,$ $a\leq d,$ such that
$x_e,e_2,e_1\in ]a,d[$ and $U_1$ is continuous on
$]a,d[\times[0,1] \cup [0,1]\times ]a,d[.$ Due to the monotonicity of $U_1$ and Lemma 2 and Proposition 8 from \cite{Mesiarova2018} we know that $U_1(a,x) = a$ and $U_1(d,x)=d$ for all $x\in ]a,d[.$ Therefore, the restriction of $U_1$ to $[a,d]^2$ is a uninorm on $[a,d]$ which is non-continuous only in points $(a,d)$ and $(d,a).$ Proposition 6 from \cite{Ruiz} then shows that  $U_1$ restricted to $]a,d[^2$ is a linear transformation of some representable uninorm restricted to $]0,1[^2.$ 
Lemma 10 from \cite{Mesiarova2017OS} then implies \begin{equation}\label{eq1}U_1(x,y)=\min(x,y)
\text{ for all } (x,y)\in ([0,a]\times [a,d[) \cup ([a,d[\times [0,a]),\end{equation} 
and \begin{equation}\label{eq2}U_1(x,y)=\max(x,y)
\text{ for all } (x,y)\in ([d,1]\times ]a,d]) \cup (]a,d]\times [d,1]).\end{equation}
Thus, all cuts $U_1(\cdot,x)$ for $x\in ]a,d[$ are continuous with range $[0,1].$ However, for all $x\in [0,a]\cup [d,1]$ the cut $U_1(\cdot,x)$
is either non-continuous, or its range is not $[0,1].$
Indeed, in the opposite case we can show similarly as above that $U_1(x,y)=e_1$
for some $y\in [0,1]$ and Lemma 4 from \cite{Mesiarova2018} shows that $x\notin \{a,d\}$. However, then there exist idempotent points $a_2,d_2\in [0,1]$ of $U,$ $a_2\leq d_2$ such that $x,y,e_1\in ]a_2,d_2[,$
the restriction of $U_1$ to $]a_2,d_2[^2$ is a linear transformation of some representable uninorm and either $a\in ]a_2,d_2[,$ or $d\in ]a_2,d_2[,$
which is a contradiction, since each representable uninorm has only trivial idempotent points $e,0,1.$ 

Now, consider any $x\in ]a,d[.$ Since range of $U_1(\cdot,x)$ on $]a,d[$ is $]a,d[$ and $e_2\in ]a,d[,$  there exists $y\in ]a,d[$ such that $U_1(y,x)=e_2.$ 
Moreover, there exists $y_0 \in [0,1]$ such that $N_1(y_0)=y.$
Then $$U_2(N_2(y_0),x)=U_1(N_1(y_0),x) = U_1(y,x)=e_2.$$ Thus for all $x\in ]a,d[$ there exists $z\in [0,1]$ such that $U_2(z,x)=e_2.$ Moreover, for each $v\in [0,1]$ 
$$v=U_2(v,e_2) =U_2(v,U_2(z,x))=U_2(U_2(v,z),x),$$ i.e., the range of $U_2(\cdot,x)$  is $[0,1]$ and due to the monotonicity of $U_2$ the cut $U_2(\cdot,x)$ is continuous for all $x\in ]a,d[.$ Thus, $U_2$ is continuous on
$]a,d[\times[0,1] \cup [0,1]\times ]a,d[.$ 

\noindent Since $N_1$ is continuous, there exists   $x\in [0,1]$ such that $N_1(x)=a$ and  then 
$$N_2(x)=U_2(N_2(x),e_2)=U_1(N_1(x),e_2) = U_1(a,e_2)=a$$
and similarly $N_1(y)=d$ implies $N_2(y)=d.$ Therefore,
$U_2(a,a)=U_1(a,a)=a$ and $U_2(d,d)=U_1(d,d)=d.$ Thus, similarly as above,
$U_2$ restricted to $]a,d[^2$ is a linear transformation of some representable uninorm restricted to $]0,1[^2.$ 
\newline
For all $x\in [0,1]$ such that $N_1(x)\leq a$  we get $$N_2(x)=U_1(N_1(x),e_2)=N_1(x)$$
due to Eq. \eqref{eq1} and for all $x\in [0,1]$ such that $N_1(x)\geq d$ we get $$N_2(x)=U_1(N_1(x),e_2)=N_1(x)$$
due to Eq. \eqref{eq2}. Then, for each  $x\in [0,a]\cup [d,1]$  there exists $x^* \in [0,1]$ such that $N_2(x^*) = N_1(x^*)=x$.  and we have $$U_1(x,y) = U_1(N_1(x^*),y) = I(x^*,y) = U_2(N_2(x^*),y) = U_2(x,y),$$ 
for all  $y \in [0,1]$. Therefore, for any $x\in [0,a]\cup [d,1]$
we have $U_1(x,y)=U_2(x,y)$ for all $y\in [0,1].$ This shows that $U_1$ and $U_2$ can differ only on $]a,d[^2$
and since $U_2$ restricted to $]a,d[^2$ is a linear transformation of some representable uninorm restricted to $]0,1[^2$ we see that $U_2$ has continuous underlying functions.
\end{proof}

\begin{remark}
\begin{itemize}
\item[(i)]
From  \cite[Proposition 11]{Mesiarova2017OS} we know that each uninorm with continuous underlying functions can be expressed as an ordinal sum of semigroups related to continuous Archimedean t-norms, t-conorms, representable uninorms and trivial semigroups. Proposition \ref{proposition:3} shows that if $I(x,y)=U_1(N_1(x),y)=U_2(N_2(x),y)$ holds for all $x,y\in[0,1]$ for some continuous fuzzy negations $N_1,N_2$ with $N_1\neq N_2$ and disjunctive uninorms with continuous underlying functions $U_1,U_2$ then the corresponding ordinal sum decomposition of $U_1$ and $U_2$ differ only in one summand, which corresponds to the top element in the related linear order and in both cases it is a representable semigroup defined on $]a,d[$.
\item[(ii)] Let $U_1:[0,1]^2\to [0,1]$ be a disjunctive uninorm with continuous underlying functions and neutral element $e_1\in ]0,1[$ and let $N_1:[0,1]\to [0,1]$ be any continuous fuzzy negation. If the top summand in the ordinal sum decomposition of $U_1$ is a representable semigroup defined on $]a,d[$ then from the proof of Proposition \ref{proposition:2} and from  Proposition \ref{proposition:3} it follows that
for any $s\in ]a,d[\setminus \{e\}$ there exists a disjunctive uninorm  $U_2:[0,1]^2\to [0,1]$ with continuous underlying functions and neutral element $s$ such that 
$U_1(N_1(x),y)=U_2(N_2(x),y)$ holds for all $x,y\in[0,1],$ where $N_2:[0,1]\to [0,1]$ given by $N_2(x)=U_1(N_1(x),s)$ for all $x\in [0,1]$ is a continuous fuzzy negation.
\end{itemize}
\end{remark}

\begin{example} 
Let $a=\frac{1}{4},$ $d=\frac{3}{4},$ let
 $U^{'}_1:[a,d]^2\to [a,d]$ 
be a linear transformation of the uninorm $U_1$  from Example \ref{example:1} and let $U^{*}_1:]a,d[^2\to ]a,d[$  be the restriction of $U^{'}_1$  to $]a,d[.$
 If we define a function $U_3:[0,1]^2\to [0,1]$ by
$$U_3(x,y)=\begin{cases}
\frac{1}{4} + \frac{1}{2} \frac{(x-\frac{1}{4})(y-\frac{1}{4})}{(x-\frac{1}{4})(y-\frac{1}{4}) + (\frac{3}{4}-x)(\frac{3}{4}-y)} &\text{if $x,y\in ]\frac{1}{4},\frac{3}{4}[,$} \\
\max(x,y) &\text{if $\max(x,y)\geq \frac{3}{4},$} \\
\min(x,y) &\text{otherwise,}
\end{cases}$$
then $U_3$ is a disjunctive uninorm with continuous underlying functions with neutral element $e_3=\frac{1}{2}$, which can be expressed as ordinal sum of semigroups $G_1=(]a,d[,U_1^*),$  $G_2=([0,a],\min)$ and $G_3=([d,1],\max)$ with respect to linear order $\preceq$ given by $3\prec 2\prec 1$. Note that semigroups $([0,a],\min)$ and $([d,1],\max)$ can be further decomposed into ordinal sum of trivial semigroups. Let $N_1:[0,1]\to [0,1]$ be the continuous fuzzy negation given by $N_1(x) = 1-x$ for all $x\in [0,1].$
Then $(U,N)$-implication $I:[0,1]^2\to [0,1]$ with $I(x,y)=U_3(N_1(x),y)$  is  for all $x,y\in [0,1]$ given by
$$I(x,y)=\begin{cases}
\frac{1}{4} + \frac{(\frac{3}{4}-x)(y-\frac{1}{4})}{2(\frac{3}{4} - x)(y-\frac{1}{4}) +2(x-\frac{1}{4})(\frac{3}{4}-y) } &\text{if $x,y\in ]\frac{1}{4},\frac{3}{4}[,$} \\
\max(1-x,y) &\text{if $\max(1-x,y)\geq \frac{3}{4},$} \\
\min(1-x,y)  &\text{otherwise.}
\end{cases}$$
For any $s\in ]\frac{1}{4},\frac{3}{4}[$ we have
$I(x,s)=\frac{(\frac{3}{4}-x)(s-\frac{1}{4})}{2(\frac{3}{4} - x)(s-\frac{1}{4}) +2(x-\frac{1}{4})(\frac{3}{4}-s) } +\frac{1}{4}$ for all $x\in ]\frac{1}{4},\frac{3}{4}[$ and  $I(x,s)=1-x$ for all $x\in [0,\frac{1}{4}]\cup [\frac{3}{4},1].$
For example, for $s=\frac{3}{8}$ we have
$I(x,s)=\frac{3}{16x}$ for all $x\in ]\frac{1}{4},\frac{3}{4}[.$
Thus, $N_2:[0,1]\to [0,1]$ given by $ N_2(x) = I(x,\frac{3}{8})$ is a continuous fuzzy negation and $N_2(x)=N_2^{-1}(x).$ This shows us that the corresponding uninorm $U_4:[0,1]^2\to [0,1],$ such that $U_3(N_1(x),y)=U_4(N_2(x),y)$ is given by $I(N_2(x),y),$ i.e., 
$$U_4(x,y) = \begin{cases}
\frac{1}{4}+\frac{3(x-\frac{1}{4})(y-\frac{1}{4})}{6(x-\frac{1}{4})(y-\frac{1}{4}) +2(\frac{3}{4}-x)(\frac{3}{4}-y) }&\text{if $x,y\in  ]\frac{1}{4},\frac{3}{4}[,$} \\
\max(x,y) &\text{if $\max(x,y)\geq \frac{3}{4}$}, \\
\min(x,y) &\text{otherwise.} 
\end{cases}$$
It is not difficult to check that $U_4$ on $]\frac{1}{4},\frac{3}{4}[^2$ is just the linear transformation of the restriction of $U_2$ from Example \ref{example:1} to $]0,1[,$ which we denote by $U_2^*.$ Thus, $U_4$ can be expressed as ordinal sum of semigroups $H_1=(]\frac{1}{4},\frac{3}{4}[,U_2^*),$  $H_2=([0,\frac{1}{4}],\min)$ and $H_3=([\frac{3}{4},1],\max)$ with respect to linear order $\preceq$ given by $3\prec 2\prec 1$.
\end{example}

\subsection{Uninorms with non-continuous underlying functions}

In this section, we study uninorms with non-continuous underlying functions which do not ensure uniqueness of representation of $(U,N)$-implications. According to Proposition \ref{proposition:2} we may search for uninorms that have continuous, increasing horizontal cuts despite of not having continuous underlying functions. We have already seen in Example \ref{example:unique_rep} that this is not always the case, but in the following we show an example of two disjunctive uninorms, which do not have continuous underlying functions, however, $U_1(N_1(x),y)=U_2(N_2(x),y)$ for all $x,y\in [0,1]$ is satisfied for some continuous fuzzy negations $N_1$ and $N_2.$

\begin{example}
\label{ex.nonc}
    Let us consider $e_1 = \frac{1}{2}$ and $f : [0,1] \to [0,1]$ the function defined by $f(x)=x^2$. Since $f$ is a continuous, strictly increasing function with $f(0)=0$ and $f(1)=1$, we can define the following functions
    $$f^{(n)}(x) = \overbrace{f \circ \dots \circ f}^{n}  (x) = x^{2^n}, \quad f^{(-n)}(x) = \overbrace{f^{-1} \circ \dots \circ f^{-1}}^{n}  (x) = x^{\frac{1}{2^n}} = x^{2^{-n}},$$
    where $ n \in \mathbb{Z}^{+}$. We adopt the convention that $f^{(0)} = \text{id}$. In this case, the following points $u^{(n)} = f^{(n)}(e_1) = 2^{-2^n}$ for all $n \in \mathbb{Z}$ form a partition of $]0,1[$, i.e,
    $$
    ]0,1[ = \bigcup_{n \in \mathbb{Z}} ] u^{(n+1)}, u^{(n)}] = \bigcup_{n \in \mathbb{Z}} ] 2^{-2^{n+1}}, 2^{-2^n}].
    $$
    Having said this, we define the following function
    \begin{equation}\label{eqU1R}U_1(x,y)=\begin{cases}
1 &\text{if $\max(x,y)=1,$}\\
0 &\text{if $\min(x,y)=0$ and $\max(x,y)<1,$}\\
f^{(n+m)}(\min(f^{(-n)}(x),f^{(-m)}(y)))  &\text{if $x\in ]u^{(n+1)},u^{(n)}],$ $y\in ]u^{(m+1)},u^{(m)}].$}
\end{cases}\end{equation}
Notice that, if $x \in ]u^{(n+1)},u^{(n)}]$ then $f^{(-n)}(x) \in ]u,e_1]$. Then, it is clear that $U_1(x,y) \in ] u^{(n+m+1)},u^{(n+m)}]$ whenever $x\in ]u^{(n+1)},u^{(n)}],$ $y\in ]u^{(m+1)},u^{(m)}]$ with $n,m \in \mathbb{Z}$. Moreover,
\begin{eqnarray}\label{eq:1}
U_1(u^{(n)},y) &=& f^{(n+m)}(\min(f^{(-n)}(u^{(n)}),f^{(-m)}(y))) = f^{(n+m)}(\min(e_1,f^{(-m)}(y))) \\ \nonumber
&=& f^{(n+m)}(f^{(-m)}(y)) = f^{(n)}(y),
\end{eqnarray}
for all $y \in ]0,1[$ with $y \in ]u^{(m+1)},u^{(m)}]$ and $n \in \mathbb{Z}$. In particular, if $y = u^{(m)}$ then $U_1(u^{(n)},u^{(m)}) = f^{(n)}(u^{(m)}) = u^{(n+m)}$. Now, we will show that $U_1$ is a disjunctive uninorm. The commutativity of $U_1$ is obvious from Eq. (\ref{eqU1R}). If $y = e_1$ then $x=1$ implies $U_1(x,y)=1$, $x=0$ implies $U_1(x,y)=0$, and $x \in ]u^{(n+1)},u^{(n)}]$ implies $U_1(x,y)=f^{(0)}(x)=x$ by Eq. (\ref{eq:1}) with $n=0$ since $e_1 \in ]u,e_1]$. Thus, from the commutativity of $U_1$ we deduce that $e_1$ is the neutral element of $U_1$. For the associativity consider $x,y,z \in [0,1]$. If $\max(x,y,z)=1$ or $\min(x,y,z)=0$ the associativity is clear. Otherwise, let us consider $x \in ]u^{(n+1)},u^{(n)}]$, $y \in ]u^{(m+1)},u^{(m)}]$ and $z \in ]u^{(k+1)},u^{(k)}]$. Here
\begin{eqnarray*}
U_1(x,U_1(y,z)) &=& U_1(x, f^{(m+k)}(\min(f^{(-m)}(y),f^{(-k)}(z)))) \\
&=& f^{(n+m+k)}(\min(f^{(-n)}(x), f^{-(m+k)}\circ f^{(m+k)}(\min(f^{(-m)}(y),f^{(-k)}(z))))), \\
&=& f^{(n+m+k)}(\min(f^{(-n)}(x),f^{(-m)}(y),f^{(-k)}(z)),
\end{eqnarray*}
since $\min(f^{(-m)}(x),f^{(-k)}(x)) \in ]u,e_1]$ and $f^{(m+k)}(\min(f^{(-m)}(y),f^{(-k)}(z)))) \in ]u^{(m+k+1)},u^{(m+k)}]$. Analogously, we obtain
$$
U_1(U_1(x,y),z) = f^{(n+m+k)}(\min(f^{(-n)}(x),f^{(-m)}(y),f^{(-k)}(z)),
$$
and the associativity holds. Now, we have to show the monotonicity. Due to the commutativity we will show it only for the second coordinate. Consider $x,y,z \in [0,1]$ with $y< z$, then we distinguish between three cases:
\begin{itemize}
    \item If $x=1$ then $U_1(x,y)=1= U_1(x,z)$.
    \item If $x=0$ then $z<1$ implies $y<1$ and $U_1(x,y)=0=U_1(x,z)$, while $z=1$ implies $y<1$ and $U_1(x,y)=0<1=U_1(x,z)$.
    \item If $x \in ]u^{(n+1)},u^{(n)}]$ for some $n \in \mathbb{Z}$ and $y=0$ then $U_1(x,y)=0 \leq U_1(x,z)$. If $z=1$ then $U_1(x,y) \leq 1 = U_1(x,z)$. Otherwise, $y>0$ implies $z>0$ and $z<1$ implies $y<1$, i.e., $y \in ]u^{(m+1)},u^{(m)}]$ and $z \in ]u^{(k+1)},u^{(k)}]$ for some $m,k \in \mathbb{Z}$. Here, $y<z$ implies $m \geq k$ and we have to consider two more cases:
    \begin{itemize}
        \item If $m=k$ then $f^{(-m)}(y) < f^{(-m)}(z)$ and we get
        $$U_1(x,y) = f^{(n+m)}(\min(f^{(-n)}(x),f^{(-m)}(y))) \leq f^{(n+m)}(\min(f^{(-n)}(x),f^{(-k)}(z))) = U_1(x,z).$$
        \item If $m > k$ then $U_1(x,y) \in ]u^{(n+m+1)},u^{(n+m)}]$ and $U_1(x,z) \in ]u^{(n+k+1)},u^{(n+k)}]$ so we have
        $$
        U_1(x,y) \leq u^{(n+m)} \leq u^{(n+k+1)} < U_1(x,z).
        $$
    \end{itemize}
\end{itemize}
Finally, since $U_1(1,0)=1$ we know that $U_1$ is a disjunctive uninorm. Note that according to Eq. (\ref{eq:1}), all the cuts $U_1(u^{(n)},\cdot)$ for $n \in \mathbb{Z}$ correspond to $f^{(n)}$, so they are continuous, strictly increasing functions with $f^{(n)}(0)=0$ and $f^{(n)}(1) = 1$. However, $U_1$ does not have continuous underlying functions since it is non-continuous on the lower border of each rectangle
 $[u^{(n+1)},u^{(n)}]\times [u^{(m+1)},u^{(m)}].$
 Indeed, if $n,m \in \mathbb{Z}$ and $y \in ] u^{(m+1)},u^{(m)}]$ we get 
 $$U_1(u^{(n)},y) = f^{(n)}(y),$$
 and 
 \begin{eqnarray*}
 \lim_{s \to (u^{(n)})^{+}} U_1(s,y) &=& \lim_{s \to (u^{(n)})^{+}} f^{(n+m-1)}(\min(f^{(-n+1)}(s),f^{(-m)}(y))), \\
&=& f^{(n+m-1)}(\min(f^{(-n+1)}(u^{(n)}),f^{(-m)}(y))) \\
&=& f^{(n+m-1)}(\min(u,f^{(-m)}(y))) \\
&=& f^{(n+m-1)}(u) \\
&=& f^{(n)}(u^{(m)}),
 \end{eqnarray*}
 where the equality holds only if $y = u^{(m)}$. Next, we define the function $N_1 : [0,1] \to [0,1]$ by
$$N_1(x) = 2^{-2^{-\log_2(-\log_2(x))}}.$$
Then $N_1$ is a continuous, strictly decreasing fuzzy negation such that 
$$N_1(u^{(n)})=N_1(2^{-2^n})=2^{-2^{-n}}=u^{(-n)},$$
for all $n\in \mathbb{Z}.$ Now, we define the function $N_2:[0,1]\to [0,1]$ by
$$N_2(x)=U_1(N_1(x),u)=(N_1(x))^2.$$
Then, $N_2$ is also a continuous, strictly decreasing fuzzy negation and
$$ U_1(N_2(x),u^{(-1)})=U_1(U_1(N_1(x),u),u^{(-1)})=
U_1(N_1(x),U_1(u,u^{(-1)}))=U_1(N_1(x),e_1)=N_1(x).$$
Further, we define function $U_2:[0,1]^2\to [0,1]$ by $$U_2(x,y) = U_1(u^{(-1)},U_1(x,y)).$$
It is easy to check that $U_2$ is a disjunctive uninorm  with neutral element $u$ since all its properties follow from the corresponding properties of $U_1$ and the strict monotonicity and continuity of $f^{(-1)}.$
Then, for all $x,y\in [0,1]$ we get  
$$U_2(N_2(x),y) = U_1(u^{(-1)},U_1(N_2(x),y))=
U_1(U_1(u^{(-1)},N_2(x)),y)=U_1(N_1(x),y).$$
\end{example}

In the following, we will generalize the situation in Example \ref{ex.nonc} for an arbitrary uninorm $U$ with non-continuous underlying functions. We first point out that, if $U\in \mathcal{U}_x^{hc}$ then  $U$ is completely characterized in the square $]a,d[$ determined by the the limits of the sequence of the powers of $x$ by the horizontal cut, and the restriction of $U$ to $[x,e]$.

\begin{remark}
\label{remNC}
If 
$U\in \mathcal{U}_x^{hc}$
then
there exists $y\in ]0,1[$ such that $U(x,y)=e.$
Note that due to the monotonicity of $U$ and $\mathrm{Ran}(U(x,\cdot)) = [0,1]$ we have $U(x,0)=0$ and $U(x,1)=1.$
Since $U(x,e)=x$, due to the monotonicity of $U$, $x<e$ implies $y>e$ and $x>e$ implies $y<e.$
Then, for all $v\in [0,1]$ we have
$$v=U(e,v)=U(U(y,x),v)=U(y,U(x,v)),$$
i.e, also the cut $U(y,\cdot)$ has range $[0,1]$ and due to the monotonicity it is continuous.
If $U(x,s)=U(x,t)$ for some $s,t\in [0,1]$ then
$$s=U(e,s)=U(y,U(x,s))=U(y,U(x,t))=U(U(y,x),t)=U(e,t)=t$$
and thus the cut $U(x,\cdot)$ is strictly increasing and similarly we can show that the  cut $U(y,\cdot)$ is strictly increasing. 
Moreover,
$$U(U(x,x),U(y,y))=U(x,U(U(x,y),y))=U(x,U(e,y))=U(x,y)=e,$$
i.e., if we denote $x^{(0)}_U=e$ and $x_U^{(n)}=U(x,x_U^{(n-1)})$ for all $n\in \mathbb{N}$ and $y^{(0)}_U=e$ and $y_U^{(n)}=U(y,y_U^{(n-1)})$ for all $n\in \mathbb{N}$ we get similarly as above that $U(x_U^{(n)},y_U^{(n)})=e$ for all $n\in \mathbb{N}.$
Then, cuts  $U(x_U^{(n)},\cdot)$ and  $U(y_U^{(n)},\cdot)$
are continuous, strictly increasing, with range $[0,1]$ for all $n\in \mathbb{N}.$
Moreover, since $U(x_U^{(n)},y_U^{(n)})=e$  we will denote 
 $x_U^{(-n)}=y_U^{(n)}$ for all $n\in \mathbb{N}.$ 
 
 For simplicity we will further denote $f^{(n)}(s)=U(x_U^{(n)},s)$ and $f^{(-n)}(s)=U(y_U^{(n)},s)$ for $n\in \mathbb{N}$ and
$f^{(0)}(s)=s.$ Note that
$$
f^{(n)} \circ f^{(-n)}(s) = U(x_U^{(n)},U(y_U^{(n)},s)) = U(e,s)=s,
$$ and 
$$
f^{(n)} \circ f^{(m)}(s) = U(x_U^{(n)},U(x_U^{(m)},s)) = U(x_U^{(n+m)},s)= f^{(n+m)}(s),
$$
for all $s \in [0,1]$ and $n,m\in \mathbb{Z}$. Further, each $f^{(n)}$ for $n\in \mathbb{Z}$ is completely determined by $U(x,\cdot).$
 The strict monotonicity of each cut $U(x_U^{(n)},\cdot)$ for $n\in \mathbb{Z}$ implies that
 if
 $s\in ]x_U^{(n+1)},x_U^{(n)}]$ for some $n\in \mathbb{Z}$ then $U(x_U^{(m)},s)\in ]x_U^{(n+m+1)},x_U^{(n+m)}]$ for all $m\in \mathbb{Z}.$ 
 In addition, $s_1=f^{(-n)}(s) \in ]x,e]$  is the unique value such that $U(s_1,x_U^{(n)}) =f^{(n)}(s_1) =s.$

Further we will consider $x<y$ since otherwise it is enough just to redefine our points. Consider a function $F:[x,e]^2\to [0,1]$ given by $$F(s,t)=U(s,t) \text{   for all $s,t\in [x,e].$}$$
Then $F$ is non-decreasing, $F(s,t)=F(t,s)$ for all $s,t\in [x,e]$ and   
$F(s,e)=s$ for all $s\in [x,e].$
Moreover, for all $s,t\in ]a,d[,$
where $a = \lim\limits_{n\to +\infty} x_U^{(n)}$
and $d = \lim\limits_{n\to -\infty} y_U^{(n)},$
there exists exactly one $n\in \mathbb{Z}$ and
exactly one $m\in \mathbb{Z}$ 
such that $s\in ]x_U^{(n+1)},x_U^{(n)}]$
and  $t\in ]x_U^{(m+1)},x_U^{(m)}].$
Then $f^{(-n)}(s),f^{(-m)}(t) \in ]x,e]$ and we get
\begin{eqnarray}\label{eq:2}
    U(s,t) &=& U(f^{(n)} \circ f^{(-n)}(s),f^{(m)} \circ f^{(-m)}(t)) = U(x_U^{(n+m)}, U(f^{(-n)}(s),f^{(-m)}(t))) \\ \nonumber
    &=& f^{(n+m)}(U(f^{(-n)}(s),f^{(-m)}(t)))) = f^{(n+m)}(F(f^{(-n)}(s),f^{(-m)}(t)))).
\end{eqnarray}
Thus, $U$ on $]a,d[^2$ is completely determined by $U(x,\cdot)$ and the function $F$,  i.e., the restriction of $U$ to $[x,e]$.

Due to the monotonicity of $U$ and since $U(x,\cdot)$ is strictly increasing, we know that $F(s,t)\in ]x_U^{(2)},e]$
for all $s,t\in ]x,e].$ Since $U$ is associative, for any $s,t,u\in ]x,e]$ if $F(s,t)\in ]x_U^{(n+1)},x_U^{(n)}]$
and $F(t,u)\in ]x_U^{(m+1)},x_U^{(m)}]$ for some $n,m \in \{0,1\}$ then 
 by Eq. (\ref{eq:2}) we obtain the following
\begin{eqnarray}
\label{eqas}
f^{(m)}(F(s,f^{(-m)}(F(t,u))))  &=&  U(s,F(t,u)) =  U(s,U(t,u)) =  U(U(s,t),u) \\ \nonumber
&=& U(F(s,t),u) 
= f^{(n)}(F(f^{(-n)}(F(s,t)),u)).
\end{eqnarray}
Summarizing, $F$ is non-decreasing, commutative, has neutral element $e,$ satisfies Eq. \eqref{eqas} for all 
$s,t,u\in ]x,e]$ and $F(x,s)=f(s)$ for all $s\in [x,e].$

\end{remark}

From the previous discussion we see that if we want to construct a  uninorm $U\in \mathcal{U}_x^{hc}$ then we have to solve two problems. How to define $U$ on $]a,d[^2$ and how to define $U$ on $[0,1]^2\setminus ]a,d[^2.$
For the first problem we have the following answer.
We can either start from $x<e$ or from $x>e,$ however, since these two approaches are similar, we will take $x<e.$
Moreover, for each   continuous, strictly increasing function $f:[0,1]\to [0,1]$  with $f(0)=0,$  $f(1)=1$ we denote $f^{(0)}(x)=x,$  $f^{(n)}(x)=f(f^{(n-1)}(x))$ and $f^{(-n)}(x)=f^{-1}(f^{(-n+1)}(x))$ for all $x\in [0,1],$ $n\in \mathbb{N},$ where $f^{-1}$ is the inverse function of $f.$

\begin{proposition}\label{prop:F}
    Let $f:[0,1]\to [0,1]$ be a continuous, strictly increasing function with $f(0)=0,$  $f(1)=1$ and $f(y)=e$ for some $y,e\in ]0,1[$ with $y>e$ and denote $x=f(e).$ Further denote $\lim\limits_{n\to +\infty}f^{(n)}(x) = a$
and $\lim\limits_{n\to +\infty}f^{(-n)}(x) = d.$ Let $F:[x,e]^2\to [f(x),e]$ be a commutative, non-decreasing function such that  $F(s,e)=s$ and $F(s,x)=f(s)$ for all $s\in [x,e],$
and $F$ satisfies Eq. \eqref{eqas}
for all $s,t,u\in ]x,e]$ such that
$F(s,t)\in ]f^{(n+1)}(e),f^{(n)}(e)]$
and $F(t,u)\in ]f^{(m+1)}(e),f^{(m)}(e)]$ for some $n,m \in \{0,1\}$. Then, function $U:]a,d[^2 \to ]a,d[$ given by
$$
U(s,t) = f^{(n+m)}(F(f^{(-n)}(s),f^{(-m)}(s))), \quad  \text{if $s\in ]f^{(n+1)}(e),f^{(n)}(e)],$ $t\in ]f^{(m+1)}(e),f^{(m)}(e)],$}
$$
for some $n,m\in \mathbb{Z}$ is a uninorm on $]a,d[,$ i.e., a commutative, associative, non-decreasing function with neutral element, defined on $]a,d[.$
\end{proposition}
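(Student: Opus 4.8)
The plan is to verify the four defining properties of a uninorm on the open interval $]a,d[$ — commutativity, the neutral element $e$, monotonicity, and associativity — in that order, after recording some elementary facts about the iterates $f^{(n)}$; the argument essentially reverses the computations of Remark \ref{remNC}. First I would observe that, $f$ being a continuous strictly increasing self-bijection of $[0,1]$, each iterate $f^{(n)}$ ($n\in\mathbb{Z}$) is well defined and satisfies $f^{(n)}\circ f^{(m)}=f^{(n+m)}$ and $f^{(n)}\circ f^{(-n)}=\mathrm{id}$ on $[0,1]$. Writing $x^{(n)}=f^{(n)}(e)$, the hypothesis $y>e$ gives $x=f(e)<f(y)=e$, so $(x^{(n)})_{n\in\mathbb{Z}}$ is strictly decreasing with $x^{(0)}=e$, $x^{(1)}=x$, and $a=\lim_{n\to+\infty}x^{(n)}<e<\lim_{n\to-\infty}x^{(n)}=d$; hence the half-open intervals $]x^{(n+1)},x^{(n)}]$, $n\in\mathbb{Z}$, partition $]a,d[$, and $f^{(-n)}$ maps $]x^{(n+1)},x^{(n)}]$ bijectively onto $]x,e]$, the domain of $F$. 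This makes $U$ well defined; moreover, for $s',t'\in\,]x,e]$ the monotonicity of $F$ together with $F(x,\cdot)=f$ and $F(e,\cdot)=\mathrm{id}$ gives $x^{(2)}=f(x)<f(t')=F(x,t')\le F(s',t')\le F(e,e)=e$, so $F(s',t')$ lies in $]x^{(2)},e]$. This shows $U$ takes values in $]a,d[$ and — the point that will matter for associativity — that the partition index of $F(s',t')$ always belongs to $\{0,1\}$.

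Commutativity would then be immediate from the commutativity of $F$ and the symmetry of $n+m$ in the exponent. For the neutral element, since $e=x^{(0)}$ lies in the block of index $0$, for $s$ in the block of index $n$ I get $U(s,e)=f^{(n)}\bigl(F(f^{(-n)}(s),e)\bigr)=f^{(n)}(f^{(-n)}(s))=s$. For monotonicity I would fix such an $s$, put $s'=f^{(-n)}(s)$, and note that on each block $]x^{(m+1)},x^{(m)}]$ the map $t\mapsto U(s,t)=f^{(n+m)}\bigl(F(s',f^{(-m)}(t))\bigr)$ is a composition of non-decreasing maps and takes the value $f^{(m)}(s)$ at the right endpoint $x^{(m)}$; and that for $t$ in the block of index $m$ one has $f^{(-m)}(t)>x$, so $F(s',f^{(-m)}(t))\ge F(s',x)=f(s')$ and therefore $U(s,t)\ge f^{(n+m)}(f(s'))=f^{(m+1)}(s)=U(s,x^{(m+1)})$. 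Thus the piecewise non-decreasing branches match across the block boundaries, and chaining over the finitely many blocks between any $t_1\le t_2$ yields monotonicity of $U(s,\cdot)$; commutativity then gives monotonicity in the first variable.

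The substantive step is associativity. Given $s,t,u\in\,]a,d[$ with partition indices $n,p,q$, set $s'=f^{(-n)}(s)$, $t'=f^{(-p)}(t)$, $u'=f^{(-q)}(u)\in\,]x,e]$, and let $i,j\in\{0,1\}$ be the partition indices of $F(s',t')$ and $F(t',u')$. Unwinding the definition of $U$ — using that $U(s,t)=f^{(n+p)}(F(s',t'))$ has partition index $n+p+i$ with $f^{(-(n+p+i))}(U(s,t))=f^{(-i)}(F(s',t'))$, and symmetrically that $U(t,u)$ has index $p+q+j$ with $f^{(-(p+q+j))}(U(t,u))=f^{(-j)}(F(t',u'))$ — I would obtain
\begin{align*}
U(U(s,t),u)&=f^{(n+p+q)}\Bigl(f^{(i)}\bigl(F(f^{(-i)}(F(s',t')),u')\bigr)\Bigr),\\
U(s,U(t,u))&=f^{(n+p+q)}\Bigl(f^{(j)}\bigl(F(s',f^{(-j)}(F(t',u')))\bigr)\Bigr).
\end{align*}
The equality of the two right-hand sides is exactly Eq. \eqref{eqas} applied to the triple $(s',t',u')\in\,]x,e]^3$ with the indices $i,j$ — an application that is legitimate precisely because $i,j\in\{0,1\}$, which is where the estimate $F(s',t')>x^{(2)}$ of the first paragraph is used.

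I expect the main obstacle to be the index bookkeeping in the associativity step: correctly identifying the block into which each of $F(s',t')$, $F(t',u')$, $U(s,t)$, $U(t,u)$ falls, computing the dragged-back representatives $f^{(-(\cdot))}(\cdot)$, and above all ensuring these intermediate values stay in the blocks of index $0$ and $1$ so that the hypothesis Eq. \eqref{eqas} on $F$ genuinely applies. The closely related gluing of the monotone branches across block boundaries, which relies on $F(\cdot,x)=f$, is the other fiddly point; once both are handled, the displayed identities are pure substitution.
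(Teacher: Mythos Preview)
Your proof is correct and follows the same strategy as the paper: partition $]a,d[$ into the blocks $]x^{(n+1)},x^{(n)}]$, pull back to $]x,e]$ via $f^{(-n)}$, apply $F$, push forward, and reduce associativity to Eq.~\eqref{eqas}. Your presentation is a bit more economical --- you parametrize the associativity computation by the block indices $i,j\in\{0,1\}$ of $F(s',t')$ and $F(t',u')$ and invoke Eq.~\eqref{eqas} once, and you handle monotonicity by sandwiching $U(s,\cdot)$ on block $m$ between $f^{(m+1)}(s)$ and $f^{(m)}(s)$, whereas the paper writes out the four cases explicitly in both places --- but the underlying argument is identical.
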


\begin{proof}
    Since $f$ is a continuous, strictly increasing function we know that for each $s \in ]a,d[$ there exists exactly one $n \in \mathbb{Z}$ such that $s \in ]f^{(n+1)}(e),f^{(n)}(e)]$. Thus, $U$ is well defined. Note that $U(x,s) = f(s) = F(x,s)$ for all $s \in ]x,e]$ and $F(x,x)=f(x)=f^{(2)}(e)$. Since $F$ is commutative, also $U$ is commutative. Moreover, for $s \in ] f^{(n+1)}(e),f^{(n)}(e)]$ we have
    $$
    U(s,e) = f^{(n)}(F(f^{(-n)}(s),e)) = f^{(n)} \circ f^{(-n)}(s) =s,
    $$
    i.e., $e$ is the neutral element of $U$. For the associativity, let us assume $s,t,u \in ]a,d[$ and let
$s\in ]f^{(p+1)}(e),f^{(p)}(e)],$ $t\in ]f^{(q+1)}(e),f^{(q)}(e)]$ and $u\in ]f^{(r+1)}(e),f^{(r)}(e)],$ for some $p,q,r \in \mathbb{Z}$.  For simplicity, we denote $s_1 = f^{(-p)}(s)$, $t_1 = f^{(-q)}(t)$ and $u_1 = f^{(-r)}(u)$. 
Then $U(s,t) =f^{(p+q)}(F(s_1,t_1)) $ and $U(t,u) = f^{(q+r)}(F(t_1,u_1)).$
Since $s_1,t_1,u_1 \in ]x,e]$, then $F(s_1,t_1),F(t_1,u_1) \in ]f(x),e]$ and we need to distinguish between several cases:
\begin{itemize}
\item If $F(s_1,t_1) \in ]x,e]$
then 
$$ U(s,U(t,u))=U(s,f^{(q+r)}(F(t_1,u_1)))=
f^{(p+q+r)}(F(s_1,F(t_1,u_1)),$$
and we separate between two more cases:
\begin{itemize}
    \item  If $F(t_1,u_1) \in ]x,e]$ then
    \begin{equation}\label{eq:ass_1}
   U(U(s,t),u)= U(f^{(p+q)}(F(s_1,t_1)),u)=f^{(p+q+r)}(F(F(s_1,t_1),u_1)),
    \end{equation}
    and since Eq. \eqref{eqas} implies $F(s_1,F(t_1,u_1)) = F(F(s_1,t_1),u_1),$ the associativity holds.
    \item $F(t_1,u_1) \in ]f(x),x]$ then
        \begin{equation}\label{eq:ass_2} U(U(s,t),u)=
        U(f^{(p+q)}(F(s_1,t_1)),u)=f^{(p+q+r+1)}(F(f^{-1}(F(s_1,t_1)),u_1)),
        \end{equation}
    and since Eq. \eqref{eqas} implies $F(s_1,F(t_1,u_1)) = f(F(f^{-1}(F(s_1,t_1)),u_1)),$ the associativity holds.
\end{itemize}
\item If $F(t_1,u_1) \in ]f(x),x]$ then
$$U(s,f^{(q+r)}(F(t_1,u_1)))=
f^{(p+q+r+1)}(F(s_1,f^{-1}(F(t_1,u_1))),$$
and we separate between two cases again:
\begin{itemize}
    \item If $F(t_1,u_1) \in ]x,e]$ then $U(s,U(t,u))$ is given by Eq. (\ref{eq:ass_1}) and since Eq. (\ref{eqas}) implies\\ $f(F(s_1,f^{-1}(F(t_1,u_1))) = F(F(s_1,t_1),u_1)$, the associativity holds.
    \item If $F(t_1,u_1) \in ]f(x),x]$ then $U(s,U(t,u))$ is given by Eq. (\ref{eq:ass_2}) and since Eq. (\ref{eqas}) implies\\ $f(F(s_1,f^{-1}(F(t_1,u_1))) = f(F(f^{-1}(F(s_1,t_1)),u_1))$, the associativity holds.
\end{itemize}
\end{itemize}
Finally, since $U$ is commutative is enough to show the monotonicity only for the second coordinate. Let us assume $s,t,u \in ]a,d[$ with $t <u$,
$s\in ]f^{(p+1)}(e),f^{(p)}(e)],$ $t\in ]f^{(q+1)}(e),f^{(q)}(e)]$ and $u\in ]f^{(r+1)}(e),f^{(r)}(e)],$ for some $p,q,r \in \mathbb{Z}$ with $s_1 = f^{(-p)}(s)$, $t_1 = f^{(-q)}(t)$ and $u_1 = f^{(-r)}(u)$. In this case, either $r=q$ and $t_1 < u_1$, or $q >r$. If $r=q$ then the monotonicity of $F$ implies
$$
U(s,t) = f^{(p+q)}(F(s_1,t_1)) \leq f^{(p+q)}(F(s_1,u_1)) =U(s,u).
$$
Otherwise, if $q>r$ we consider different cases depending on the value of $F(s_1,t_1)$ and $F(s_1,u_1)$:
\begin{itemize}
    \item If $F(s_1,t_1),F(s_1,u_1) \in ]x,e]$ then
    \begin{eqnarray*}
        U(s,t) &=& f^{(p+q)}(F(s_1,t_1)) \leq f^{(p+q)}(e) \leq f^{(p+r+1)}(e) = f^{(p+r)}(x) \\
        &<& f^{(p+q)}(F(s_1,u_1)) = U(s,u).
    \end{eqnarray*}
    \item $F(s_1,t_1) \in ]f(x),x]$ and $F(s_1,u_1) \in ]x,e]$ then
    \begin{eqnarray*}
        U(s,t) &=& f^{(p+q)}(F(s_1,t_1)) \leq f^{(p+q)}(x) = f^{(p+q+1)}(e) < f^{(p+r+1)}(e) = f^{(p+r)}(x) \\
        & < & f^{(p+r)}(F(s_1,u_1)) = U(s,u).
    \end{eqnarray*}
    \item $F(s_1,t_1),F(s_1,u_1) \in ]f(x),x]$ implies
    \begin{eqnarray*}
        U(s,t) &=& f^{(p+q)}(F(s_1,t_1)) \leq f^{(p+q)}(x) = f^{(p+q+1)}(e) \leq f^{(p+q+2)}(e) \\
        &=&f^{(p+r)}(f(x)) < f^{(p+r)}(F(s_1,u_1)) = U(s,u),
    \end{eqnarray*}
     \item $F(s_1,t_1)\in ]x,e]$, $F(s_1,u_1) \in ]f(x),x]$ implies
    \begin{eqnarray*}
        U(s,t) &=& f^{(p+q)}(F(s_1,t_1)) \leq f^{(p+q)}(s_1) \leq f^{(p+r+1)}(s_1) = f^{(p+r)}(f(s_1)) \\
        &\leq& f^{(p+r)}(F(s_1,u_1)) = U(s,u),
    \end{eqnarray*}   
since $F$ is non-decreasing and $F(s_1,t_1)\leq F(s_1,e)=s_1,$ $f(s_1)=F(s_1,x)\leq F(s_1,u_1).$
\end{itemize}
Summarizing, $U$ is a uninorm on $]a,d[^2$. 
\end{proof}

Note that if $F(s,t)\in ]x,e]$ holds for all $s,t\in ]x,e]$
then  Eq. \eqref{eqas} is equivalent to the associativity of $F.$ Proposition \ref{prop:F} and Remark \ref{remNC} completely characterize and describe the construction of $U\in \mathcal{U}_x^{hc}$ on 
 $]a,d[^2.$ However, the structure of $U$ on $[0,1]^2\setminus ]a,d[^2$ is determined only on cuts $U(x^{(n)}_U,\cdot)$ 
for $n\in \mathbb{Z}.$

In the following we will denote for $U\in \mathcal{U}_x^{hc}$ with $x<e$ $$a_x = \lim\limits_{n\to +\infty} x_U^{(n)} \quad
\text{ and } \quad d_x = \lim\limits_{n\to +\infty} x_U^{(-n)}.$$

\begin{lemma}
\label{lemNW}
Let  $U\in \mathcal{U}_x^{hc},$ $x<e$ and $f:[0,1]\longrightarrow [0,1]$ be given by $f(s)=U(x,s)$ for all $s \in [0,1]$. Then the following statements hold:
\begin{enumerate}[label=(\roman*)]
\item $f(s)\neq s$ for all $s\in ]a_x,d_x[,$
\item $f(s)=s$ for some $s\in [0,1]$ implies $U(s,y)\notin ]a_x,d_x[$ for all $y\in [0,1],$ 
\item $f(a_x)=a_x$ and $f(d_x)=d_x,$
\item $U(a_x,d_x)\in \{a_x,d_x\}.$
\end{enumerate}
\end{lemma}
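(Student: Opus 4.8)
The engine for all four claims is the structural picture assembled in Remark~\ref{remNC}, and the plan is to reduce everything to claim~(i). Writing $f^{(k)}(\cdot)=U(x_U^{(k)},\cdot)$ for $k\in\mathbb{Z}$ (so that $f=f^{(1)}$), I would first recall that each $f^{(k)}$ is continuous, strictly increasing and onto $[0,1]$; that $f^{(k)}\circ f^{(m)}=f^{(k+m)}$ and $U(x_U^{(k)},x_U^{(m)})=x_U^{(k+m)}$; that for $x<e$ the points $x_U^{(k)}$ are strictly decreasing in $k$ with $x_U^{(0)}=e$, $x_U^{(1)}=x$, $x_U^{(n)}\downarrow a_x$ and $x_U^{(-n)}\uparrow d_x$, and partition $]a_x,d_x[$ into the half-open intervals $]x_U^{(k+1)},x_U^{(k)}]$; and that the ``shift'' property holds: $s\in\,]x_U^{(k+1)},x_U^{(k)}]$ forces $U(x_U^{(m)},s)\in\,]x_U^{(k+m+1)},x_U^{(k+m)}]$ for every $m$. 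Claims (ii) and (iv) will then be obtained from (i) by pushing $f$ through a product via the associativity identity $U(x,U(a,b))=U(U(x,a),b)$, so I would treat the four items in the order (i), (ii), (iii), (iv).

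For (i) I would take $s\in\,]a_x,d_x[$ and the unique $k$ with $s\in\,]x_U^{(k+1)},x_U^{(k)}]$; since $x=x_U^{(1)}$, the shift property gives $f(s)\in\,]x_U^{(k+2)},x_U^{(k+1)}]$, which is disjoint from $]x_U^{(k+1)},x_U^{(k)}]$ because $x_U^{(k+2)}<x_U^{(k+1)}<x_U^{(k)}$, hence $f(s)\neq s$. For (ii) I would assume $f(s)=s$, i.e.\ $U(x,s)=s$, and suppose for contradiction that $z:=U(s,y)\in\,]a_x,d_x[$ for some $y$; then
$$f(z)=U(x,U(s,y))=U\bigl(U(x,s),y\bigr)=U(s,y)=z,$$
contradicting (i), so $U(s,y)\notin\,]a_x,d_x[$ for every $y$.

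For (iii) I would use the continuity of $f$: from $f(x_U^{(n)})=x_U^{(n+1)}$ and $x_U^{(n)}\to a_x$, $x_U^{(n+1)}\to a_x$ as $n\to\infty$ I get $f(a_x)=a_x$, and symmetrically, as $n\to\infty$, $f(x_U^{(-n)})=x_U^{(-n+1)}\to d_x$ gives $f(d_x)=d_x$. For (iv) I would first note that monotonicity of $U$ together with $a_x\le e\le d_x$ yields $a_x=U(a_x,e)\le U(a_x,d_x)\le U(e,d_x)=d_x$; then, setting $c:=U(a_x,d_x)$ and using (iii) in the form $U(x,a_x)=a_x$,
$$U(x,c)=U\bigl(x,U(a_x,d_x)\bigr)=U\bigl(U(x,a_x),d_x\bigr)=U(a_x,d_x)=c,$$
so $f(c)=c$; by (i) we must have $c\notin\,]a_x,d_x[$, and with $c\in[a_x,d_x]$ this forces $c\in\{a_x,d_x\}$.

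I do not anticipate a genuine obstacle; once Remark~\ref{remNC} is available the argument is essentially bookkeeping, and the two places deserving care are the half-open-interval convention in (i) (so the ``unique $k$'' is unambiguous and the disjointness is exact) and the logical ordering, namely that (iv) uses only (i) and (iii), both proved independently of (iv), so nothing is circular. If one preferred to keep (iv) self-contained, one could replace the appeal to (iii) by the direct computation $U(x,a_x)=\lim_k U(x,x_U^{(k)})=\lim_k x_U^{(k+1)}=a_x$, again using continuity of the cut.
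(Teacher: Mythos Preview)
Your proof is correct and follows essentially the same approach as the paper: the disjoint-interval argument for (i), the associativity trick $f(U(s,y))=U(f(s),y)$ for (ii), continuity of the cut for (iii), and the combination of monotonicity with (i) and (iii) for (iv). The only cosmetic difference is in (iii), where the paper uses a two-sided squeeze (noting $f(s)\le\min(x,s)<a_x$ for $s<a_x$ and $f(s)\in\,]a_x,d_x[$ for $s>a_x$) rather than your direct limit along the sequence $x_U^{(n)}$, and in (iv) the paper invokes (ii) with $s=a_x$ instead of re-deriving $f(c)=c$ explicitly; both variants are equivalent.
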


\begin{proof}
\begin{enumerate}[label=(\roman*)]
\item
By Remark \ref{remNC} for each $s\in ]a_x,d_x[$ there exists exactly one $n\in \mathbb{N}$ such that $s\in ]x^{(n+1)}_U,x^{(n)}_U]$
and then $f(s)\in ]x^{(n+2)}_U,x^{(n+1)}_U].$
Since $]x^{(n+1)}_U,x^{(n)}_U]\cap ]x^{(n+2)}_U,x^{(n+1)}_U] = \emptyset$ we get $f(s)\neq s$ for all $s\in ]a_x,d_x[.$
\item If $f(s)=s $ for some $s\in [0,1]$ and $U(s,y)\in ]a_x,d_x[$ for some $y\in [0,1]$ then  previous item implies $$U(s,y)\neq f(U(s,y))=U(f(s),y)=U(s,y),$$ which is  a contradiction.
\item For all $s\in ]a_x,d_x[$ we have $f(s)\in ]a_x,d_x[$ and for all $s<a_x$ we have $f(s) = U(x,s)\leq \min(x,s)<a_x$ since $x<e$ and $s<a_x<e.$ Then the continuity of $f$ implies $f(a_x)=a_x.$ Similarly we can show that   $f(d_x)=d_x.$
\item By the monotonicity of $U$ we get $a_x=U(a_x,e)\leq U(a_x,d_x)\leq U(e,d_x)=d_x,$ i.e., $U(a_x,d_x)\in [a_x,d_x].$
Since $f(a_x)=a_x$ we get $U(a_x,d_x)\notin ]a_x,d_x[,$ i.e., $U(a_x,d_x)\in \{a_x,d_x\}.$
\end{enumerate}
\end{proof}

In one particular case we are able to characterize $U\in \mathcal{U}_x^{hc}$ with $x<e$  on whole $[0,1]^2.$
 If $U(x,t)=t$ for all $t\in [0,a_x]\cup [d_x,1]$  we obtain the following result.

\begin{proposition}
Let $U\in \mathcal{U}_x^{hc}$ with $x<e.$
If $U(x,t)=t$ for all $t\in [0,a_x]\cup [d_x,1]$ then $U$ is an ordinal sum of semigroups defined on $]a_x,d_x[$ and on $[0,a_x]\cup [d_x,1].$
\end{proposition}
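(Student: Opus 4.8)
The plan is to exhibit $U$ directly as the ordinal sum of two semigroups, $S_1$ carried by $]a_x,d_x[$ and $S_2$ carried by $[0,a_x]\cup[d_x,1]$, each equipped with the corresponding restriction of $U$, with $S_1$ the top summand of the associated linear order (it contains the neutral element $e$) and $S_2$ below it. Since $]a_x,d_x[$ and $[0,a_x]\cup[d_x,1]$ partition $[0,1]$, and associativity and commutativity pass to the restriction of $U$ on any $U$-closed subset, it suffices to establish: (a) both $]a_x,d_x[$ and $[0,a_x]\cup[d_x,1]$ are closed under $U$; and (b) on the mixed part $]a_x,d_x[\times([0,a_x]\cup[d_x,1])$ the value of $U$ agrees with the one dictated by the ordinal-sum construction.

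For (a), closure of $]a_x,d_x[$ under $U$ is already part of Remark~\ref{remNC}: using the monotonicity of $U$ and the identity $U(x_U^{(k)},\cdot)=f^{(k)}$, one gets $U(s,t)\in ]x_U^{(n+m+2)},x_U^{(n+m)}]\subseteq ]a_x,d_x[$ whenever $s\in ]x_U^{(n+1)},x_U^{(n)}]$ and $t\in ]x_U^{(m+1)},x_U^{(m)}]$; moreover this restriction is a uninorm on $]a_x,d_x[$ with neutral element $e$, which is what makes it eligible as the top summand. For $[0,a_x]\cup[d_x,1]$ I would use the hypothesis: for every $t\in[0,a_x]\cup[d_x,1]$ one has $U(x,t)=f(t)=t$, so $t$ is a fixed point of $f$, and then Lemma~\ref{lemNW}(ii) forces $U(t,y)\notin ]a_x,d_x[$ for all $y\in[0,1]$. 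In particular $U(s,t)\in[0,1]\setminus ]a_x,d_x[ = [0,a_x]\cup[d_x,1]$ whenever $s,t\in[0,a_x]\cup[d_x,1]$, so this set is $U$-closed as well.

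For (b), fix $s\in ]a_x,d_x[$ and $t\in[0,a_x]\cup[d_x,1]$; with $S_1$ the top summand and $S_2$ below it, the ordinal-sum formula prescribes $U(s,t)=t$, and this is exactly what I would verify. Let $n\in\mathbb{Z}$ be the unique index with $s\in ]x_U^{(n+1)},x_U^{(n)}]$. Monotonicity of $U$ in its first argument gives
\[
f^{(n+1)}(t)=U(x_U^{(n+1)},t)\leq U(s,t)\leq U(x_U^{(n)},t)=f^{(n)}(t).
\]
Since $f(t)=t$, every iterate $f^{(k)}$ with $k\in\mathbb{Z}$ also fixes $t$, so both outer terms equal $t$ and hence $U(s,t)=t$. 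Putting together the three regimes (both arguments in $]a_x,d_x[$; both in $[0,a_x]\cup[d_x,1]$; one in each), $U$ coincides on all of $[0,1]^2$ with the ordinal sum of $S_1$ and $S_2$, which is the assertion. The endpoints $t=a_x$ and $t=d_x$ need no special treatment, since $f(a_x)=a_x$ and $f(d_x)=d_x$ by Lemma~\ref{lemNW}(iii), consistently with the hypothesis.

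The main obstacle I anticipate is not analytic — the estimates above are short — but definitional: the summand $[0,a_x]\cup[d_x,1]$ is a union of a lower and an upper interval rather than a single interval, so one has to work with the general notion of ordinal sum of semigroups appropriate to uninorms and confirm that the mixed-part value $U(s,t)=t$ obtained above is precisely the value that notion assigns in this situation. Once that definition is fixed, everything reduces to the three-case check sketched here.
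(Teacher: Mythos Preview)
Your proof is correct and follows essentially the same plan as the paper's: verify closure of both sets and establish the mixed-part identity $U(s,t)=t$ via the squeezing $f^{(n+1)}(t)\leq U(s,t)\leq f^{(n)}(t)$ together with $f^{(k)}(t)=t$. The only minor difference is that for closure of $[0,a_x]\cup[d_x,1]$ you invoke Lemma~\ref{lemNW}(ii) directly (each such $t$ is a fixed point of $f$ by hypothesis), whereas the paper first proves the mixed-part identity and then argues by contradiction that a value $U(s,t)\in{]a_x,d_x[}$ would be an annihilator there and hence a fixed point of $f$, violating Lemma~\ref{lemNW}(i).
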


\begin{proof} Consider function $f$ and $F$ from Remark \ref{remNC}.
If $U(x,t)=t$ for all $t\in [0,a_x]\cup [d_x,1]$ then 
$U(x_U^{(2)},t)=U(U(x,x),t) = U(x,U(x,t))=U(x,t)=t$ and similarly we can show that  $U(x_U^{(n)},t)=t$ for all $t\in [0,a_x]\cup [d_x,1]$ holds for all $n\in \mathbb{Z}.$
Due to the monotonicity we obtain $U(s,t)=t$ for all $s\in ]a_x,d_x[$ and $t\in [0,a_x]\cup [d_x,1].$
Moreover, we know that $]a_x,d_x[$ is closed under $U$ due to Remark \ref{remNC}, Eq. \eqref{eq:2} and the definition of $a_x$ and $d_x.$
What remains is to show that $[0,a_x]\cup [d_x,1]$ is closed under $U$.
On the contrary, let us assume that  $U(s,t)\in ]a_x,d_x[$
for some $s,t\in [0,a_x]\cup [d_x,1].$
Then for all $u\in ]a_x,d_x[$ we have $$U(U(s,t),u)=U(s,U(t,u))=U(s,t),$$ i.e., $U(s,t)$ is the annihilator of $U$ on $]a_x,d_x[.$
However, then $f(U(s,t))=U(x,U(s,t))=U(s,t)\in ]a_x,d_x[,$
which is a contradiction with Lemma \ref{lemNW}. Thus $[0,a_x]\cup [d_x,1]$ is closed under $U$
and 
$U$ is an ordinal sum of semigroups defined on $]a_x,d_x[$ and on $[0,a_x]\cup [d_x,1].$
\end{proof}

From the previous result we see that for construction of a  uninorm $U\in \mathcal{U}_x^{hc}$  via ordinal sum construction we need only some strictly increasing continuous function  $f:[0,1]\to[0,1]$
with range $[0,1],$ a non-decreasing commutative function
$F:[x,e]^2\to [x,e]$
such that $e$ is the neutral element of $F,$ $F(x,s)=f(s)$ for $s\in [x,e]$ and it fulfills Eq. \eqref{eqas} jointly with any generalized composite uninorm defined on $[0,a_x]\cup [d_x,1]$ (see \cite{Mesiarova2016OSG}).

\begin{example}
\label{exOS}
Let $a,d\in ]0,1[,$ $a<d.$
Consider uninorm $U_1$ from Example \ref{ex.nonc}
and let $U_1^{'}$ be the linear transformation of $U_1$ to 
$[a,d]$ and let $U_1^*$ be the restriction of $U_1^{'}$ to $]a,d[^2.$ Since $]0,1[^2$ is closed under $U_1$ we know that $U_1^*$ is well defined. 
On $[0,a]$ consider the standard product $x\cdot y$
and on $[d,1]$ consider the dual operation to standard product, i.e., $x\diamond y = x+y-x\cdot y.$
Then, semigroup $G_1=([0,a],\cdot)$ corresponds to a continuous t-subnorm, semigroup 
$G_2=([d,1],\diamond)$ to a continuous  t-superconorm and $G_3=(]a,d[,U_1^*])$  to a uninorm. 
Ordinal sum of $G_i$ for $i\in \{1,2,3\}$ with respect to linear order $\preceq$ given by $2\prec 1 \prec 3$ is a disjunctive uninorm $U$, which has infinitely many continuous horizontal cuts  with range $[0,1].$ Obviously, $U$ does not have continuous underlying functions.
\end{example}

From  previous example we see that
a uninorm $U\in \mathcal{U}_x^{hc}$  can have a point of discontinuity in every interval  $]x,y[$ such that $e\in ]x,y[.$ 
Note that even if we would take in the previous example instead of $G_3$ any representable semigroup defined on $]a,d[$ still the resulting uninorm won't have continuous underlying functions.

However, in the following example we will see that a uninorm $U\in \mathcal{U}_x^{hc}$ with $x<e,$ which does not have continuous underlying functions, need not be an ordinal sum of a semigroup acting on $]a_x,d_x[$ and a semigroup acting on $[0,a_x]\cup [d_x,1].$

\begin{example} 
Let $U_1^*$ be defined as in Example \ref{exOS} for 
$a=\frac{1}{4}$ and $d=\frac{3}{4}.$ Then $U_1^*$ is a uinorm on $]a,d[$ with neutral element
$e=\frac{1}{2}$ and a continuous horizontal cut $f:]\frac{1}{4},\frac{3}{4}[ \longrightarrow ]\frac{1}{4},\frac{3}{4}[ $ in   $u=\frac{3}{8}$ given by  
$$f(s) =U_1^*(u,s)= 2\cdot (s-\frac{1}{4})^2 +\frac{1}{4},$$
for $s\in  ]\frac{1}{4},\frac{3}{4}[.$ 
We will extend $f$ to $[0,1]$ as follows.
$$f(s) = \begin{cases}
4\cdot s^2 &\text{if $s\leq \frac{1}{4},$}\\
2\cdot (s-\frac{1}{4})^2 +\frac{1}{4} &\text{if $s\in  ]\frac{1}{4},\frac{3}{4}[,$}\\
4\cdot(s-\frac{3}{4})^2+\frac{3}{4}  &\text{if $s\geq \frac{3}{4}.$}
\end{cases}$$
Then $f$ is continuous, strictly increasing, $f(0)=0,$ 
$f(1)=1$, $f(\frac{1}{4})=\frac{1}{4}$ and $f(\frac{3}{4})=\frac{3}{4}.$
We denote $f^{(0)}(s)=s,$  $f^{(n)}(s)=f(f^{(n-1)}(s))$ and $f^{(-n)}(s)=f^{-1}(f^{(-n+1)}(s))$ for all $s\in [0,1],$ $n\in \mathbb{N},$ where $f^{-1}$ is the inverse function to $f.$
Here, for all $n\in \mathbb{Z}$ and for $s\in ]\frac{1}{4},\frac{3}{4}[$ we have
$$f^{(n)}(s) = \frac{1}{4} + \frac{1}{2}\cdot \left( 2\cdot(s-\frac{1}{4})\right)^{2^n},$$
for $s\leq \frac{1}{4}$ we have
$$f^{(n)}(s) = \frac{1}{4} \cdot \left(4\cdot s\right)^{2^n},$$
and for $s\geq \frac{3}{4}$
we have $$f^{(n)}(s) = \frac{3}{4} + \frac{1}{4}\cdot \left( 4\cdot(s-\frac{3}{4})\right)^{2^n}.$$
We define a function $U:[0,1]^2\to [0,1]$ as follows.
\begin{equation} \label{eqUf} U(x,y)=\begin{cases} 
0 &\text{if $x,y\in [0,a],$} \\
1 &\text{if $x,y\in [d,1[,$} \\
d &\text{if $(x,y)\in [0,a]\times [d,1[\cup [d,1[\times [0,a],$} \\
1  &\text{if $\max(x,y)=1,$} \\
f^{(n)}(x) &\text{if $x\in [0,a]\cup [d,1], y\in ]f^{(n+1)}(e),f^{(n)}(e)]$ for $n\in \mathbb{Z},$} \\
f^{(n)}(y) &\text{if $y\in [0,a]\cup [d,1], x\in ]f^{(n+1)}(e),f^{(n)}(e)]$ for $n\in \mathbb{Z},$} \\
U_1^*(x,y) &\text{if $x,y\in ]a,d[.$}
\end{cases}\end{equation} 
We will show that $U$ is a disjunctive uninorm. 
From Eq. \eqref{eqUf} we immediately see that $U$ is commutative and disjunctive. If $x\in ]a,d[$
then $U(x,e)=x$ since $U_1^*$ is a linear transformation of $U_1$ restricted to $]0,1[.$ If 
$x\in [0,a]\cup [0,d]$ then $U(x,e)=f^{(0)}(x)=x$
since $e\in ]f(e),f^{(0)}(e)].$ Thus the commutativity of $U$ implies that $e$ is the neutral element of $U.$
For associativity consider $x,y,z\in [0,1].$ In order to have a small number of cases we will compare in each case with points from different domains values $U(x,U(y,z)),U(U(x,y),z)
$ and  $U(y,U(x,z)),$ which will then due to the commutativity cover all possible symmetric cases. Then we have the following.
\begin{itemize}
\item If $1\in \{x,y,z\}$ then   $U(x,U(y,z))=1=U(U(x,y),z)
.$
\item If $x,y,z\in [0,a]$ then $U(x,U(y,z))=0=U(U(x,y),z).$
\item If $x,y,z\in [d,1[$ then $U(x,U(y,z))=1=U(U(x,y),z)
.$
\item If $x,y,z\in ]a,d[$ then $U(x,U(y,z))=U(U(x,y),z)$ follows from the associativity of $U_1$ from Example \ref{ex.nonc}.
\item If $x\in [0,a]$ and $y,z\in [d,1[$
then $U(x,U(y,z)) = U(x,1)=1,$ $U(U(x,y),z)=U(d,z)=1$
and $U(y,U(x,z))=U(y,d)=1.$
\item If $x\in [d,1[$ and $y,z\in [0,a]$ then 
 $U(x,U(y,z)) = U(x,0)=d,$ $U(U(x,y),z)=U(d,z)=d$
and $U(y,U(x,z))=U(y,d)=d.$
\item If $x\in [0,a]$ and $y,z\in ]a,d[$
with $y\in ]f^{(n+1)}(e),f^{(n)}(e)],$ $z\in ]f^{(m+1)}(e),f^{(m)}(e)],$ $y_1=f^{(-n)}(y)\in ]u,e]$ and $z_1=f^{(-m)}(z)\in ]u,e],$ 
then $U(x,U(y,z)) = U(x,f^{(m+n)}(\min(y_1,z_1)))=f^{(m+n)}(x),$ $U(U(x,y),z)=U(f^{(n)}(x),z)=f^{(m+n)}(x)$
and $U(y,U(x,z))=U(y,f^{(m)}(x))=f^{(m+n)}(x).$
\item If $x\in ]a,d[$ and $y,z\in [0,a]$
with  $x\in ]f^{(n+1)}(e),f^{(n)}(e)],$ $x_1=f^{(-n)}(x)\in ]u,e]$ 
then 
 $U(x,U(y,z)) = U(x,0)=f^{(n)}(0)=0,$ $U(U(x,y),z)=U(f^{(n)}(y),z)=0$ since $f^{(n)}(y)\leq a$
and similarly $U(y,U(x,z))=U(y,f^{(n)}(z))=0.$
\item If $x\in [d,1[$ and $y,z\in ]a,d[$  
with $y\in ]f^{(n+1)}(e),f^{(n)}(e)],$ $z\in ]f^{(m+1)}(e),f^{(m)}(e)],$ $y_1=f^{(-n)}(y)\in ]u,e]$ and $z_1=f^{(-m)}(z)\in ]u,e],$ 
then 
$U(x,f^{(m+n)}(\min(y_1,z_1))) = f^{(n+m)}(x),$ $U(U(x,y),z)=U(f^{(n)}(x),z)=f^{(n+m)}(x)$ 
and $U(y,U(x,z))=U(y,f^{(m)}(x))=f^{(n+m)}(x).$

\item If $x\in ]a,d[$ and $y,z\in [d,1[$ with  $x\in ]f^{(n+1)}(e),f^{(n)}(e)],$ $x_1=f^{(-n)}(x)\in ]u,e]$  then  $U(x,U(y,z)) = U(x,1)=1,$ $U(U(x,y),z)=U(f^{(n)}(y),z)=1$ since $f^{(n)}(y)\geq d$
and similarly we have $U(y,U(x,z))=U(y,f^{(n)}(z))=1.$

\item If $x\in [0,a],$ $y\in ]a,d[$ and $z\in [d,1[$ with $y\in ]f^{(n+1)}(e),f^{(n)}(e)],$ $y_1=f^{(-n)}(y)\in ]u,e]$ then $U(x,U(y,z)) = U(x,f^{(n)}(z))=d,$ since $1>f^{(n)}(z)\geq d,$ $U(U(x,y),z)=U(f^{(n)}(x),z)=d$
since $f^{(n)}(x)\leq a$
and $U(y,U(x,z))=U(y,d)=f^{(n)}(d)=d.$ 
\end{itemize}

Due to the commutativity it is enough to show the monotonicity only for the second coordinate. Consider $x,y,z\in [0,1]$ with $y<z.$ Then $y<1$
and if  $z=1$ then $U(x,y)\leq 1=U(x,z).$ Further we will suppose that $z<1.$
We have the following cases.
\begin{itemize}
\item If $x=1$ then $U(x,y)=1=U(x,z).$
\item If $x\in [d,1[$ then $y\in [d,1[$ implies  $z\in [d,1[$ and  $U(x,y)=1=U(x,z).$
\item  If $x\in [d,1[$  and $y\in ]f^{(n+1)}(e),f^{(n)}(e)]$ for some $n\in \mathbb{Z}$ then $z\in [d,1[$
gives us $U(x,y)=f^{(n)}(x)<1=U(x,z).$
If $z\in ]f^{(m+1)}(e),f^{(m)}(e)]$ for some $m\in \mathbb{Z}$
then $n\geq m$ and $U(x,y)=f^{(n)}(x)\leq f^{(m)}(x)=U(x,z).$
\item If $x\in [d,1[$  and $y\in [0,a]$
then $z\in [0,a]$ implies $U(x,y)=d=U(x,z),$
$z\in [d,1[$ implies $U(x,y)=d<1=U(x,z)$
and $z\in ]f^{(m+1)}(e),f^{(m)}(e)]$ for some $m\in \mathbb{Z}$ implies $U(x,y)=d\leq f^{(m)}(x)=U(x,z).$
\item If $x\in ]f^{(n+1)}(e),f^{(n)}(e)]$ for some $n\in \mathbb{Z}$ then $y\in [d,1[$ implies  $z\in [d,1[$ and  $U(x,y)=f^{(n)}(y)\leq f^{(n)}(z)=U(x,z)$ since $f$ is strictly increasing.
\item If $x\in ]f^{(n+1)}(e),f^{(n)}(e)]$ for some $n\in \mathbb{Z}$ and $y\in ]f^{(m+1)(e)},f^{(m)}(e)]$ for some $m\in \mathbb{Z}$ then $z\in ]a,d[$ implies  $U(x,y)\leq U(x,z)$ due to the monotonicity of $U_1^*$ and $z\in [d,1[$ implies $U(x,y)\leq U(x,z)$ since $U(x,z)=f^{(n)}(z)\geq d$ and $U(x,y)\in ]a,d[.$
\item If $x\in ]f^{(n+1)}(e),f^{(n)}(e)]$ for some $n\in \mathbb{Z}$ and $y\in [0,a]$
then $z\in [0,a]$ implies $U(x,y)=f^{(n)}(y)\leq f^{(n)}(z)=U(x,z)$ since $f$ is strictly increasing.
If $z\in ]f^{(m+1)}(e),f^{(m)}(e)]$ for some $m\in \mathbb{Z}$
then $U(x,y)\leq U(x,z)$ since $U(x,y)=f^{(n)}(y)\leq a$ and $U(x,z)\in ]a,d[.$ If $z\in [d,1[$
then $U(x,y)=f^{(n)}(y)\leq a<d\leq f^{(n)}(z)=U(x,z).$
\item If $x\in [0,a]$  then $y\in [d,1[$ implies  $z\in [d,1[$ and $U(x,y)=d=U(x,z).$
\item If $x\in [0,a]$ and $y\in ]f^{(n+1)}(e),f^{(n)}(e)]$ for some $n\in \mathbb{Z}$ then  $z\in ]f^{(m+1)}(e),f^{(m)}(e)]$ for some $m\in \mathbb{Z}$ implies
$n\geq m$ and 
$U(x,y)=f^{(n)}(x)\leq f^{(m)}(x)=U(x,z).$
If $z\in [d,1[$ then $U(x,y)=f^{(n)}(x)\leq a< d=U(x,z).$
\item If $x,y\in [0,a]$ then 
$U(x,y)=0\leq U(x,z).$
\end{itemize}
Summarizing, $U$ is a disjunctive uninorm with infinitely many continuous horizontal cuts with range $[0,1],$ however, $U$ cannot be expressed as an ordinal sum of semigroups acting on $]a,d[$ and on $[0,a]\cup [d,1].$
\end{example}




\section{Conclusions}\label{section:conclusions}

In this paper, we have deeply studied the non-uniqueness of representation of $(U,N)$-implications. The starting point of our work has been revisiting the characterization of these operators, published in \cite{Backzynski2009B}, and proving that in general uniqueness of representation is not guaranteed as previously indicated by the authors. Apart from providing examples of the non-uniqueness claim, we have proved that under the assumption of a continuous negation any representation of a $(U,N)$-implication is linked to the existence of a continuous horizontal cut which is decreasing and its range is $[0,1]$. This result has motivated the study of characterizing uninorms which do not provide a unique representation when used for constructing $(U,N)$-implications with a continuous fuzzy negation $N$. We have separated this study for uninorms with and without continuous underlying functions. In the first case, we have been able to characterize all uninorms with a suitable horizontal cut.
For the second case, we have only been able to characterize the structure of the uninorm in a certain subsquare of $[0,1]^2$. However, we have pointed out a specific case in which we are able to determine the values of the uninorm in the rest of the domain to obtain an operator with the desired behavior. Since it is well-known that the structure of uninorms without continuous underlying functions is much more complex and it is not characterized, to solve the problem in general has been marked as a possible future work. For all the results, we have provided examples of the explicit construction of the corresponding uninorms. Our results not only deepen the understanding of the structure of $(U,N)$-implications and related operators but also offer novel insights into the structural properties of uninorms.

\section*{Acknowledgements}
Raquel Fernandez-Peralta was funded by the EU NextGenerationEU through the Recovery and Resilience Plan for Slovakia under the project No. 09I03-03-V04- 00557. Andrea Mesiarová-Zemánková was supported by grant VEGA 1/0036/23 and partially by the project  AIMet4AI No. CZ.02.1.01/0.0/0.0/17\_049/0008414.

\bibliographystyle{splncs04.bst}
\bibliography{main}

\end{document}